\documentclass[11pt]{article}
\usepackage[margin=1in]{geometry}

%
%



\usepackage{authblk}

\usepackage[round]{natbib}

\usepackage{amsmath,amsfonts,amssymb,amsthm,  mathrsfs,mathtools}
\usepackage{algorithmic}
\usepackage{algorithm}
\usepackage{subfigure}
\usepackage{caption}
\captionsetup{font=small}
\usepackage{subcaption}
\usepackage{wrapfig}
\usepackage{authblk}
\usepackage[utf8]{inputenc} 

%

\usepackage[capitalize,noabbrev]{cleveref}

\usepackage{babel}

\DeclareCaptionLabelFormat{andtable}{#1~#2  \&  \tablename~\thetable}

\usepackage{times}  
\usepackage{helvet}  
\usepackage{courier}  
\usepackage{caption} 
\frenchspacing  
\setlength{\pdfpagewidth}{8.5in} 
\setlength{\pdfpageheight}{11in} 
%
\usepackage{tikz}
\usetikzlibrary{arrows.meta, automata, positioning}
\usepackage{thmtools} 
\usepackage{thm-restate}
\usepackage{empheq}

\newtheorem{fact}{Fact}[section]

\newtheorem{lemma}{Lemma}[section]

\newtheorem{claim}{Claim}[section]


\newcommand{\cP}{\mathcal{P}}

\newcommand{\cU}{\mathcal{U}}
\newcommand{\cH}{\mathcal{H}}

\newcommand{\cI}{\mathcal{I}}

\newcommand{\cih}{C_{i,h}}

\newcommand{\nh}{n_h}

\NewDocumentCommand{\proj}{s e{_} m}{%
    \IfBooleanTF{#1}{#3_{\stretchrel*{\parallel}{\perp}#2}}{\vectorprojection_{#2}#3}
}

\newcommand{\bR}{\mathbb{R}}

\usepackage{xspace}


\usepackage{tabularx}
\usepackage{colortbl}






\newcommand{\rfcalg}{\textsc{RobustAlg}}
\newcommand{\probalg}{\textsc{ProbAlg}}
\newcommand{\detalg}{\textsc{DetAlg}}

\DeclareMathOperator*{\argmin}{arg\,min}

\usepackage{booktabs}
\usepackage{multirow}

\usepackage{dsfont}

\usepackage{xfrac}
\usepackage{enumitem}
\usepackage{lipsum}                     
\usepackage{xargs}                      
\usepackage{xcolor}  
\usepackage[textsize=tiny]{todonotes}

\usepackage{graphics}

\newcommand{\sI}{\mathscr{I}}
\newcommand{\rfc}[1]{\textsc{RobustFair-$k$-Center}}

\newcommand{\fc}[1]{\textsc{Fair-$k$-Center}}
\newcommand{\fa}[1]{\textsc{FairAssignment}}

\newcommand{\lbc}[1]{\textsc{LowerBound}-$(k, p)$-\textsc{Clustering}}
\newcommand{\optlb}[1]{$\text{OPT}_{\text{LB}}(\cI)$}
\newcommand{\optf}[1]{$\text{OPT}_{\text{RF}}$}

\newcommand{\optass}[1]{$\text{OPT}_{\text{Asgn}}(\sI)$}

\newcommand{\rfa}[1]{$T$-\textsc{RobustFairAssignment}}
\newcommand{\algrfa}[1]{\textsc{Algorithm-RobustFairAssignment}}
\newcommand{\trf}[1]{\textit{robust fair}}

\newcommand{\optvll}[1]{$\text{OPT}_{\text{vll}}(\cI)$}


\usepackage{lipsum}                     
\usepackage{xargs}                      
\usepackage{xcolor}  
\usepackage{todonotes}
\newcommandx{\unsure}[2][1=]{\todo[linecolor=red,backgroundcolor=red!25,bordercolor=red,#1]{#2}}
\newcommandx{\change}[2][1=]{\todo[linecolor=blue,backgroundcolor=blue!25,bordercolor=blue,#1]{#2}}
\newcommandx{\info}[2][1=]{\todo[linecolor=OliveGreen,backgroundcolor=OliveGreen!25,bordercolor=OliveGreen,#1]{#2}}
\newcommandx{\improvement}[2][1=]{\todo[linecolor=green,backgroundcolor=green!25,bordercolor=green,#1]{#2}}

\newcommand{\vS}{S_{\text{vll}}}

\newcommand{\phis}{\phi^*}
\newcommand{\mhin}{m_{h}^+}
\newcommand{\mhout}{m_{h}^-}

\newcommand{\outm}{\sum_{h\in \cH}\mhout}

\newcommand{\lp}{\mathnormal{\text{LP}(S,R)}}

\newcommand{\sgood}{\mathnormal{\hat{S}}}

\newcommand{\floor}[1]{\left\lfloor #1 \right\rfloor}
\newcommand{\ceil}[1]{\left\lceil #1 \right\rceil}

\newcommand{\mhp}{m_{h}^{+}}
\newcommand{\mhm}{m_{h}^{-}}


\newcommand{\adult}{\textbf{Adult}\xspace}
\newcommand{\cens}{\textbf{Census1990}\xspace}
\newcommand{\bank}{\textbf{Bank}\xspace}
\newcommand{\diabetes}{\textbf{Diabetes}\xspace}

\author[1]{Sharmila Duppala}
\author[1]{ Juan Luque}
\author[1,2]{John Dickerson}
\author[3]{Seyed A. Esmaeili}

\affil[1]{University of Maryland, College Park}
\affil[2]{Arthur}
\affil[3]{University of Chicago}

\date{}

\begin{document}
\title{Robust Fair Clustering with Group Membership Uncertainty Sets}
\maketitle

\begin{abstract}
  We study the canonical fair clustering problem where each cluster is constrained to have close to population-level representation of each group. Despite significant attention, the salient issue of having incomplete knowledge about the group membership of each point has been superficially addressed. In this paper, we consider a setting where the assigned group memberships are noisy. We introduce a simple noise model that requires a small number of parameters to be given by the decision maker. We then present an algorithm for fair clustering with provable \emph{robustness} guarantees. Our framework enables the decision maker to trade off between the robustness and the clustering quality. Unlike previous work, our algorithms are backed by worst-case theoretical guarantees. Finally, we empirically verify the performance of our algorithm on real world datasets and show its superior performance over existing baselines.   
\end{abstract}
\section{Introduction}\label{sec:intro} 
Machine learning and algorithmic-based decision-making systems have seen a remarkable proliferation in the last few decades. These systems are used in financial crime detection \citep{nicholls2021financial,kumar2022exploitation}, loan approval \citep{sheikh2020approach,arun2016loan}, automated hiring systems \citep{mahmoud2019performance,van2021machine}, and recidivism prediction \citep{travaini2022machine,ghasemi2021application}. The clear effect of these applications on the welfare of individuals and groups coupled with recorded instances of algorithmic bias and harm \citep{danks2017algorithmic,panch2019artificial} has made fairness---in its many forms under different interpretations, with its many definitions---a prominent consideration in algorithm design. 


Thus, it is unsurprising that fair \emph{unsupervised learning} has received great interest in the AI/ML, statistics, operations research, and optimization communities---including \emph{fair clustering}.
Clustering is a central problem in AI/ML and operations research and arguably the most fundamental problem in unsupervised learning writ large. The literature in fair clustering has produced a significant number of publications spanning a wide range of fairness notions~\citep[see, e.g.,][for an overview]{fc_tutorial}. However, the most prominent of the fairness notions that were introduced is the \emph{group fairness} notion due to~\citet{chierichetti2017fair}, \citet{bercea2018cost}, and \citet{bera2019fair}. Since our paper is concerned with this notion in particular, for ease of exposition we will simply refer to it as fair clustering. In fair clustering, each point belongs to a demographic group and therefore each demographic group has some percentage representation in the entire dataset.\footnote{As a simple example, the demographic groups could be based on income. Therefore, each point would belong to an income bracket and each income bracket would have some percentage representation (e.g., $20\%$ belong to group ``$\leq \text{USD}\$30\text{k}$,'' $10\%$ belong to group ``$\geq \text{USD}\$250\text{k}$,'' and so on) of the dataset.} Like in agnostic (ordinary or ``unfair'') clustering the dataset is partitioned into a collection of clusters. However, unlike agnostic clustering each cluster must have a proportional representation of each group that is close to the representation in the entire dataset. For example, if the dataset consists of groups $A$ and $B$ at $30\%$ and $70\%$ representation, respectively. Then each cluster in the fair clustering should have a  $(30\pm \epsilon)\%$ and  $(70\pm \epsilon)\%$ representation of groups $A$ and $B$, respectively. 

%

One can see a significant advantage behind group fairness. Each cluster has close to dataset-level representation\footnote{And, ideally, \emph{population-level} representation---yet this may not hold in common machine learning applications, where proportionally sampling an underlying population to form a training dataset requires deep nuance.  For a discussion of biased sampling and its implications in fair machine learning, we direct the reader to~\citet[][Chapters 4 \& 6]{barocas-hardt-narayanan}.} of each group, so any outcome associated with any cluster will affect all groups proportionally, satisfying the disparate impact doctrine \citep{feldman2015certifying}, as discussed by~\citet{chierichetti2017fair}. Despite the attractive properties of group fairness, it requires complete knowledge of each point's membership in a group. In practice---say, in an advertising setting where membership is estimated via a machine learning model, or in a lending scenario where membership may be illegal to estimate at train time---knowledge of group membership may range from noisy, to adversarially corrupted, to completely unknown. This salient problem has received significant attention in fair \emph{classification}~\citep[see, e.g.,][]{awasthi2020equalized,awasthi2021evaluating,wang2020robust,hashimoto2018fairness,kallus2022assessing,lamy2019noise}. However, this important consideration has not received significant attention in fair \emph{clustering} with the exception of the theoretical work of~\citet{esmaeili2020probabilistic} that introduced uncertain group membership and the empirical work of~\citet{Chhabra23:Robust}, who provide a data-driven approach to achieve robustness against adversarial perturbations on fair clustering systems. 

Our paper addresses the practical modeling shortcomings of both ~\citet{esmaeili2020probabilistic} and ~\citet{Chhabra23:Robust} and gives new theoretical worst-case guarantees. In short, the model of \citet{esmaeili2020probabilistic} makes the strong assumption of having probabilistic information about the group membership of each point in the dataset and the weak guarantee of having proportional representation of each group in every cluster but only in expectation. Further, \citet{Chhabra23:Robust} looks into \emph{black-box adversarial perturbations} on fair clustering. However, in their model it is assumed that only a fixed subset of points in the dataset will have their memberships perturbed and it is not clarified how this fixed subset is exactly decided. In Section~\ref{subsec:prior_noise_models} and Appendix~\ref{app:prior_noise} we give a more detailed comparison to these prior works and demonstrate their weaknesses.

\paragraph{Outline and Contributions:} In Section \ref{sec:relatedwork}, we briefly go over some prior work in fair clustering and other works in fair classification with emphasis on papers that tackle the incomplete/noisy group membership case. Then in Section \ref{sec:prelims}, we formally describe the basic clustering setting and introduce our notation then we give an overview of the prior noise models of \citep{esmaeili2020probabilistic} and \citep{Chhabra23:Robust}. In Section~\ref{sec:model}, we present our noise model. Our model requires a small number of parameters as input instead of full probabilistic information for each point. In fact, as a special case it can be given only one parameter that represents the bound on the maximum number of incorrectly assigned group memberships in the dataset. Based on the framework of robust optimization, we then define the \emph{robust fair clustering} problem for the $k$-center objective. In Section \ref{sec:analysis}, we present our theoretically grounded algorithm to solve the robust fair $k$-center problem. Our algorithms require making careful observations about the structure of a robust fair solution and represents a novel addition to the existing fair clustering algorithms. Finally, in Section \ref{sec:experiments} we validate the performance of our algorithm on real world datasets and show that it has superior performance in comparison to the existing methods. 

\section{Additional Related Work}\label{sec:relatedwork}

We will focus on the fairness notion most relevant to us in fair clustering, specifically where the solution is constrained to have proportional group representation in each cluster \citep[e.g.,][and others]{chierichetti2017fair,bercea2018cost,bera2019fair,dickerson2023doubly,wang2023scalable,Zeng_2023_CVPR}. Under the assumption that group memberships are perfectly known, this notion is well-investigated. For example, \citet{backurs2019scalable} gives faster scalable algorithms for this problem to handle large datasets. \citet{bera2019fair} have considered a variant of this problem when each point is allowed to belong to more than one group simultaneously. Further, variants of this notion in non-centroid based clustering have also been considered. \citet{ahmadian2020fair} address the same fairness notion in correlation clustering whereas \citet{kleindessner2019guarantees} address it in spectral clustering, and \citet{knittel2023generalized,Knittel23:Fair} address it in hierarchical clustering. 

The problem of incomplete and imperfect knowledge of group memberships has received significant attention in fair classification. \citet{awasthi2020equalized} study the effects on the equalized odds notion of \citet{hardt2016equality} when the group memberships are perturbed. \citet{awasthi2021evaluating} study the effects of using a classifier to predict the group membership of a point on the bias of downstream ML tasks. \citet{kallus2022assessing} study a similar problem but focusing mainly on assessing the disparate impact in various applications when the group memberships are not unavailable and have to be predicted instead. Robust optimization methods were used to obtain fair classifiers under the setting of noisy group memberships by~\citet[e.g.,][]{wang2020robust} and unavailable group memberships by~\citet[e.g.,][]{hashimoto2018fairness}. While our problem falls under the robust optimization framework, our techniques are very different. 

\section{Preliminaries and Previous Noise Models}\label{sec:prelims}
In this section we go through preliminary background, notation, and previously introduced noise models in clustering.  
\subsection{Preliminaries}
Let $\cP$ be a set of $n$ points in a metric space with distance function $d : \cP \times \cP \rightarrow \bR_{\geq 0}$. 
In $k$-center clustering, the goal is to select a set of centers $S$ from $\cP$ of at most $k$ points and an assignment $\phi : \cP \rightarrow S$ minimizing the clustering cost which is $\text{cost}(S,\phi):= \max_{j \in \cP}d(j,\phi(j))$, i.e., the cost is the maximum distance between a point and its assigned center. Since the clustering cost is the maximum distance between a point and its center, we also refer to that cost as the clustering radius or just radius. Clearly, in the ordinary $k$-center problem, $\phi$ will assign each point $j \in \cP$ to its closest center in $S$, i.e., $\phi(j)=\argmin_{i \in S} d(j,i)$. On the other hand, finding $\phi$ is non-trivial in more general $k$-center variants when constraints are imposed, as $\phi$ may assign points to centers that are further away to satisfy the imposed constraint. 

We index the set of $\ell$ many demographic groups that exist in the dataset by $\cH = \{1,2,\dots, \ell\}$. Following the fair clustering literature we associate a specific color with each group \citep{chierichetti2017fair,bercea2018cost,bera2019fair}. Therefore, we use the words group and color interchangeably. Let $\cP_h$ denote the subset of points in $\cP$ that are assigned color $h$. Each point \( j \in \cP \) belongs to exactly one color from the set of colors \( \cH \). We can equivalently describe this assignment using the function \( \chi: \cP \to \cH \), such that for any \( j \in \cP_h \), the color assignment for \( j \) would be \( \chi(j) = h \). We denote the total number of points of color $h$ by $n_h=|\cP_h|$, it follows that $\sum_{h \in \cH} n_h = n$.
Further, given a solution $(S, \phi$), for each $i\in S$, $C_i$ denotes the set of points assigned to center $i$ (i.e., cluster $i$) and $C_{i,h}$ denotes the subset of points in that cluster belonging to group $h$. 
The \fc{} problem~\citep[e.g.,][]{chierichetti2017fair,bercea2018cost,bera2019fair,esmaeili2020probabilistic} adds the following fairness constraint to the $k$-center objective, formally the optimization problem is:
\begin{subequations} \label{lp:faircluster}
\begin{equation} \label{eq:cluster_obj_fc}
\min\limits_{S : |S|\leq k,  \phi}  \ \ \max_{j \in \cP} \ d(j, \phi(j)) 
\end{equation}    
\begin{equation}  \label{eq:fair}
  \forall i \in S, \forall h \in \cH:  l_h   \hspace{0.1cm} \leq  \hspace{0.1cm} \frac{|\cih|}{|C_i|}  \hspace{0.1cm} \leq   \hspace{0.1cm} u_h   \hspace{0.5cm}  
\end{equation}
\end{subequations}
where $l_h$ and $u_h$ are proportion bounds that satisfy $0  < l_h \leq r_h \leq u_h <  1$ 
with $r_h$ being the ratio (proportion) of group $h$ in the entire set of points, i.e., $r_h\!:= \frac{n_h}{n}$. Therefore, 
an instance of \fc{} is parametrized by the tuple $(\cP,\chi,k,\cH,\Vec{l},\Vec{u})$. 

\subsection{Previous Noise Models in Fair Clustering}\label{subsec:prior_noise_models}
In this section we give more  details about \citep{esmaeili2020probabilistic} and \citep{Chhabra23:Robust}, the two prior works which have considered robustness in fair clustering. \citep{esmaeili2020probabilistic} introduced a probabilistic noise model where each point $j \in \cP$ has a probability $p_{j,h} \in [0,1]$ of belonging to group $h$, with $\sum_{h\in \cH}p_{j,h} = 1$. While their algorithms satisfy proportional fairness constraints in expectation\footnote{Since each point has some probability of belonging to each specific group, one can calculate the expected number of points belonging to a specific group in a clustering by simply adding the points' probabilities in that cluster.}, the worst-case realization can significantly violate these constraints as noted earlier. In fact, in \cref{app:pfc_discussion} we show an example where a clustering of the given points satisfies fairness in expectation, but violates it completely in realization. This highlights a core deficiency in this model. 


\citet{Chhabra23:Robust} introduced an adversarial model where the adversary has access to a subset of points whose group memberships can be modified. However, they do not specify how this subset is selected. In their experiments, they independently sample points with equal probability and add them to this subset. We can construct instances where with high probability certain point combinations are never sampled in the subset, thereby heavily restricting the model’s capability. 
We give a concrete discussion of this in \cref{app:chhabra_discussion}. Moreover, their algorithm does not have theoretical guarantees. 




\section{Our Noise Model and Problem Statement}\label{sec:model}
In our model we assume that there exists a number of points whose group memberships (colors) have been incorrectly assigned to other groups. The two main considerations in our model are that: (1) in general these incorrect assignments exhibit a heterogeneity across the groups and (2) that the incorrect assignments can be arbitrarily allocated across the dataset. 

The first consideration is based on the fact that in many settings there exist group memberships that are more desirable than others and therefore individuals may misreport their group memberships as other more favorable groups \citep{krumpal2013determinants}. Further, the mechanism through which the group memberships were assigned may exhibit higher error rates for particular groups. For example, the method used to elicit group memberships may fail with higher rates on some particular groups. Therefore, noise exhibits heterogeneity across the groups and an effective noise model should capture that.

The second consideration is based on the fact that incorrect group assignments could arise from a set of possibilities and therefore unlike \cite{Chhabra23:Robust} we should not assume knowledge of these particular noisy points. The (noise) perturbations in the group assignments could have resulted from a process similar to iid noise as done in \cite{Mehrotra22:Fair} and \cite{mehrotra2021mitigating}. At another extreme, the group memberships could have been assigned using a machine learning classifier which predicts the group memberships, in that case if the classifier's errors are localized to a specific region in the feature space\footnote{This could be the case, if the training dataset happens to be particularly scare in that region.} then clearly the group membership perturbations do not act similar to random noise. 
Further, note that both scenarios are empirically well-motivated and could possibly occur in the same dataset simultaneously. Therefore, an effective noise model should not assume knowledge of the spatial noise distribution and allow noise to be arbitrarily allocated across the dataset. 


Now, we delve into the formal description of the model. For a given color $h \in \cH$ we associate two parameters $\mhp$ and $\mhm$, the first is the maximum number of points that were mistakenly assigned group memberships other than $h$ and the second is the maximum number of points that were mistakenly assigned to group $h$.

Further, for a given set of value $\mhp$ and $\mhm$, by definition the consistency of the values requires that the following inequalities should be satisfied:
\begin{align}
    & \forall h \in \cH: \mhp \leq \sum_{g \in \cH, g \neq h} m_g^{-} \label{eq:plus} \\     
    & \forall h \in \cH: \mhm \leq \sum_{g \in \cH, g \neq h} m_g^{+} \label{eq:minus}
\end{align}
In words, the first inequality \eqref{eq:plus} simply states that no color should ``gain'' more points than the total number of points ``lost'' by the other groups. Similarly, the second inequality \eqref{eq:minus} states that if a color loses some number of points than the rest of the colors must gain at least the same amount in total. Note that since no color can lose more points than it has, an additional set of inequalities is also implied, namely $\forall h \in \cH: \mhm \leq n_h$. We did not list it as we assume that any given set values of $\mhm$ always satisfy it.  

The values of $\mhp$ and $\mhm$ lead to new  possible group membership assignments (colorings) $\hat{\chi}$ other than the original coloring $\chi$. Following the language of robust optimization \citep{ben2009robust}, the set of all possible colorings $\hat{\chi}: \cP \to \cH$ that result from a given set of values $\{\mhp,\mhm\}_{h \in \cH}$ is referred to as the \emph{uncertainty set} $\cU$. We use $\{\cP_h\}_{h\in \cH}$ to denote the color partition of $\cP$ where $j \in \cP_h$ has color $h$ and we define $\chi^{-1}(h)= \cP_h$. We can analogously define the partition $\{\hat{\cP}_h\}_{h\in \cH}$ for the assignment $\hat{\chi}$ where $\hat{\chi}^{-1}(h)= \hat{\cP}_h.$
More formally, given a valid set of noise parameters $\{\mhp,\mhm\}_{h \in \cH}$ satisfying inequalities \eqref{eq:plus} and \eqref{eq:minus} the uncertainty set $\cU$ is 
\begin{flalign}
    \cU = \{ \hat{\chi} \mid \hat{\chi} \text{ satisfies } \eqref{eq:neg}-\eqref{eq:pos}  \} 
\end{flalign}
\begin{subequations} 
    \begin{equation}\small 
    \label{eq:neg}
   \forall h \in \cH:  |\cP_h \setminus \hat{\cP}_h| \leq  \mhm \\ 
    \end{equation}
    \begin{equation}\small 
    \label{eq:pos}
   \forall h \in \cH:  |\hat{\cP}_h \setminus \cP_h | \leq  \mhp  
    \end{equation}
\end{subequations}

\begin{restatable}{proposition}{uncertaintyset} \label{prop:uncertaintyset}
For any instance with noise parameters $\{ \mhp,\mhm \}_{h \in \cH}$, the group uncertainty set $\cU$ is defined as the set of all group assignments $\hat{\chi}:\cP \rightarrow \cH$ that satisfy the constraints in \eqref{eq:neg} and \eqref{eq:pos}.
\end{restatable}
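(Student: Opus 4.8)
The plan is to read the statement as an equivalence between two descriptions of $\cU$: the operational one given informally above the proposition --- colorings $\hat{\chi}$ obtainable from the ground-truth coloring $\chi$ by mislabeling at most $\mhm$ points away from each color $h$ and at most $\mhp$ points into each color $h$ --- and the constraint-based one, namely the colorings satisfying \eqref{eq:neg} and \eqref{eq:pos}. Accordingly I would prove the two inclusions separately, after first fixing the bookkeeping that translates ``mislabelings'' into set differences.

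First I would set up the correspondence. For a coloring $\hat{\chi}$ with induced partition $\{\hat{\cP}_h\}_{h \in \cH}$, a point $j$ with $\chi(j)=h$ and $\hat{\chi}(j)=g\neq h$ is exactly a point that color $h$ ``loses'' and color $g$ ``gains''; hence $\cP_h \setminus \hat{\cP}_h$ is the set of points $h$ loses and $\hat{\cP}_h \setminus \cP_h$ is the set it gains, so $|\cP_h\setminus\hat{\cP}_h|$ and $|\hat{\cP}_h\setminus\cP_h|$ count the per-color loss and gain. With this identification the operational bounds ``loss of $h$ at most $\mhm$'' and ``gain of $h$ at most $\mhp$'' are literally \eqref{eq:neg} and \eqref{eq:pos}, which gives the forward inclusion. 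For the reverse inclusion I would observe that any coloring $\hat{\chi}$ can be regarded as a mislabeling of $\chi$ by reassigning each point $j$ from $\chi(j)$ to $\hat{\chi}(j)$, so a coloring satisfying \eqref{eq:neg}--\eqref{eq:pos} is realized by a mislabeling whose per-color loss and gain respect the prescribed budgets, and hence lies in the operationally described set. Non-emptiness is then immediate, since $\hat{\chi}=\chi$ makes $\cP_h=\hat{\cP}_h$ and satisfies both constraints with left-hand sides equal to $0$.

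To connect the per-color budgets to the global picture I would invoke the conservation identity $\sum_{h\in\cH}|\cP_h\setminus\hat{\cP}_h| = \sum_{h\in\cH}|\hat{\cP}_h\setminus\cP_h|$, which holds because each mislabeled point is lost by exactly one color and gained by exactly one color. This is where the consistency conditions \eqref{eq:plus} and \eqref{eq:minus}, together with the implied $\mhm\leq n_h$, enter: they ensure that each color's loss budget can be absorbed as gains by the other colors and vice versa, so the budgets are jointly realizable rather than contradictory. The main obstacle I anticipate is not any single inequality but pinning down what actually has to be shown, since the statement is close to a definition; the substantive content is the reverse inclusion together with this realizability check --- confirming that the two families of per-color constraints, subject to \eqref{eq:plus}--\eqref{eq:minus}, impose no hidden further restriction and that every constraint-feasible coloring corresponds to a genuine noise realization. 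Once the loss/gain bookkeeping and the conservation identity are in place, the remaining steps are routine counting.
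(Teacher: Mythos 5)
Your proposal is correct and rests on the same core bookkeeping as the paper's proof: identifying $\cP_h \setminus \hat{\cP}_h$ and $\hat{\cP}_h \setminus \cP_h$ as the points color $h$ loses and gains, so that the per-color noise budgets $\mhm$ and $\mhp$ translate literally into constraints \eqref{eq:neg} and \eqref{eq:pos}. The only difference is one of completeness: the paper, treating the statement as essentially definitional, verifies only the forward direction (every coloring in the uncertainty set satisfies the two constraints), whereas you also spell out the reverse inclusion, non-emptiness, and the conservation identity linking the per-color budgets to \eqref{eq:plus}--\eqref{eq:minus} --- all of which is sound, though not strictly required for the equivalence itself.
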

The above simply states that any $\hat{\chi}$ coloring (element) in the uncertainty set should result in an assignment where (i) the number of points that are assigned a color $h$ by $\chi$ but actually belong to a color $g \neq h$ should not exceed $\mhout$ and (ii) the number of points that are assigned a color $g \neq h$ by $\chi$ but actually have color $h$ is no more than $\mhp.$ 

For the case of two colors (denote them by red and blue), we naturally have $m_{\text{red}}^+ = m_{\text{blue}}^-$ and $m_{\text{red}}^- = m_{\text{blue}}^+$. To see that, note that using inequalities \eqref{eq:plus} and \eqref{eq:minus} it follows that $m_{\text{red}}^+ \leq m_{\text{blue}}^- \leq m_{\text{red}}^+$ and therefore $m_{\text{red}}^+ = m_{\text{blue}}^-$. Similarly, one can show that $m_{\text{red}}^- = m_{\text{blue}}^+$. The new implied equalities are natural as they simply state that what one color loses is gained by the other and vice versa.

To give a sense of the resulting group assignments implied by a given set of values $\mhp$ and $\mhm$ consider the two color toy example shown in Figure \ref{fig:heterogenous_noise}. In this example, we have $m_{\text{red}}^- =2$ whereas $m_{\text{blue}}^- =1$. Further, from the previous discussion since we have two colors then we immediately have  $m_{\text{red}}^+ =1$ and $m_{\text{blue}}^+ =2$. The figure shows the many possible colorings that can result from the given noise values, note that even for this simple example there are a total of $12$ possibilities which is larger than the number of given points $4$. A robust fair clustering has to achieve fairness over all possible colorings (all colorings in the uncertainty set). 
\begin{figure}[H]
    \centering
\includegraphics[scale=0.3]{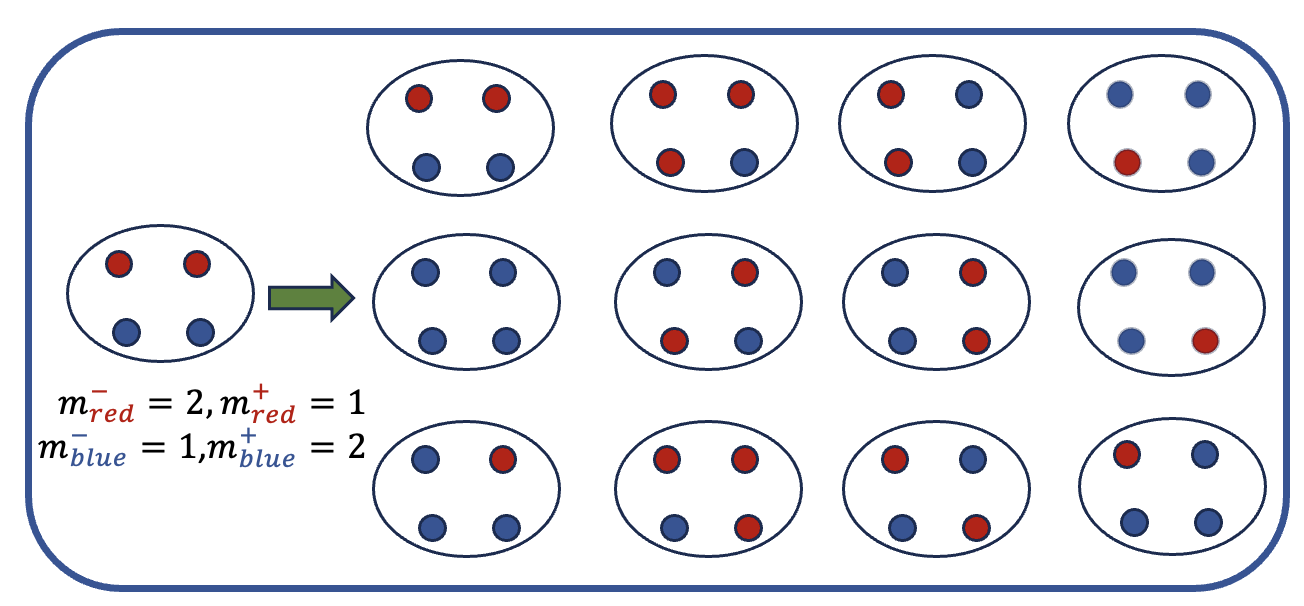}
    \caption{All colorings in the uncertainty set of a toy example with $4$ points and $m_{\text{red}}^-=2, m_{\text{blue}}^-=1$ are shown.}
    \label{fig:heterogenous_noise}
\end{figure}
We further note that a simple possible assignment of the color parameters would set them all to the same value, i.e., $\forall h \in \cH: \mhp=\mhm=m$. This essentially states that any color can increase or decrease by $m$ points. Figure \ref{fig:final-model-fig} shows the same previous example where all noise parameters have been set to $2$. Clearly, the number of possible colorings (size of the uncertainty set) has increased from $12$ to $16$. 
 \begin{figure}[H]
    \centering
\includegraphics[scale= 0.3]{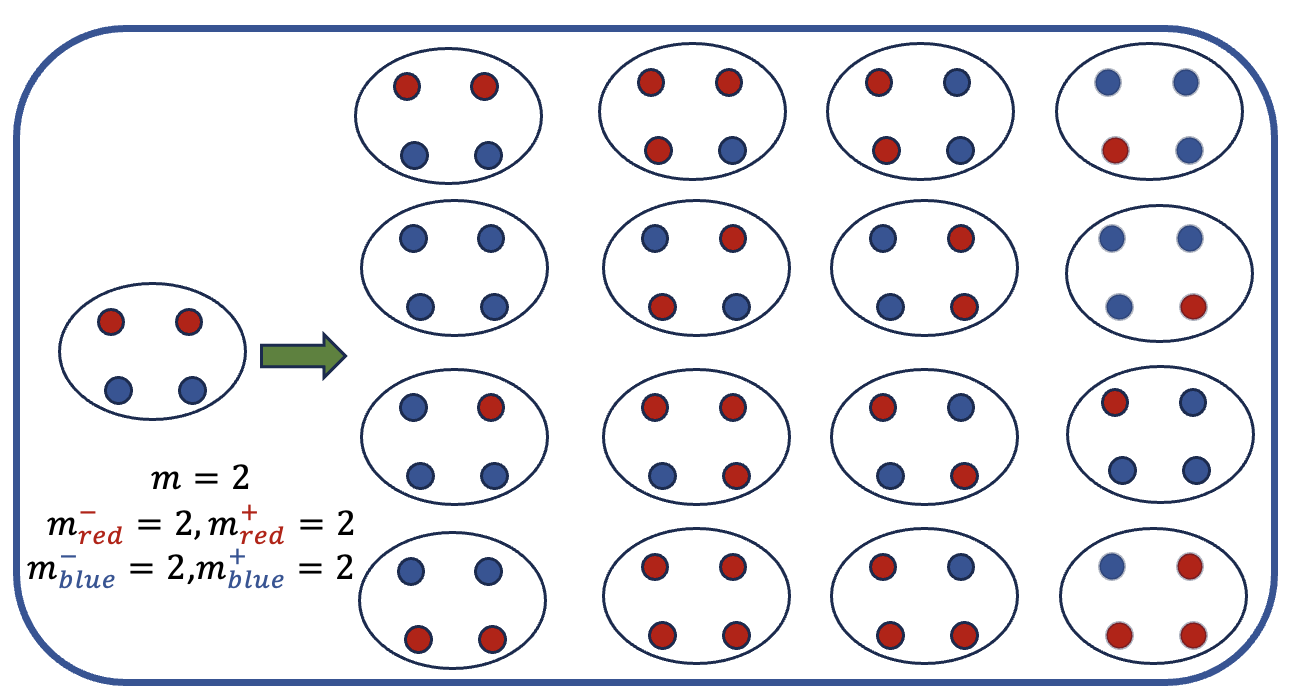}
    \caption{All colorings in the uncertainty set for the same toy example, now with $ m = 2 $. Note the new color assignments in the bottom row where the two (originally) blue points become red.}
    \label{fig:final-model-fig}
\end{figure}
\vspace{-0.5cm}
We are now ready to state our problem. Given an instance of \textit{fair clustering} $(\cP,\chi,k,\cH,\Vec{l},\Vec{u})$ along with noise parameters $\{\mhp,\mhp\}_{h \in \cH}$. The objective of the \rfc{} problem is to find a clustering that minimizes the $k$-center objective while ensuring that the fairness constraints are satisfied for every possible coloring in the uncertainty set $\cU$. Formally, the optimization problem of \rfc{} is 
\begin{subequations} \label{lp:robustfaircluster}   
\begin{equation} \label{eq:cluster_obj_rfc}
\min\limits_{S : |S|\leq k,  \phi}  \ \ \max_{j \in \cP} \ d(j, \phi(j)) 
\end{equation}    
\begin{equation}  \label{eq:robust_fair}
\forall \hat{\chi} \in \cU, \forall i \in S, 
 h \in \cH:   l_h   \leq  \frac{|C_{i,h}(\hat{\chi}) | }{ |C_i|} \leq  u_h 
\end{equation}
\end{subequations}
where $C_{i,h}(\hat{\chi}):= \hat{\chi}^{-1}(h)\cap C_i$ denotes the subset of points in $C_i$ which have been assigned to group $h$ by a coloring $\hat{\chi} \in \cU$. We also define a $\lambda$-violating solution as one where the fairness constraints of \eqref{eq:robust_fair} are violated by at most $\lambda$, formally a  $\lambda$-violating solution satisfies
\begin{equation}  \label{eq:robust_fair_viol}
\forall \hat{\chi} \in \cU, \forall i \in S, 
 h \in \cH:   l_h -\lambda  \leq  \frac{|C_{i,h}(\hat{\chi}) | }{ |C_i|} \leq  u_h +\lambda
\end{equation}
Clearly, smaller $\lambda$ implies a smaller violation of the fairness constraints and any value of $\lambda \ge 1$ is vacuous. 

Finally, we note that our model requires the decision maker to specify a total of at most $2\ell$ many parameters unlike \cite{esmaeili2020probabilistic}  which needs a total of at least $(\ell-1) \cdot n$ many parameters (necessarily growing with the size of the dataset). The values in our model can be simply set using prior knowledge and statistics. Furthermore, if the decision maker does not posses fine-grained knowledge about the noise parameters $\mhp,\mhm$ for each group then deciding one value $m$ and setting $\mhp=\mhm=m$ to upper bound the total change in any group would be sufficient. While this would increase the size of the uncertainty set, a clustering that is robust fair under an uncertainty set remains robust fair under a more restricted one. More formally, given a problem instance and two uncertainty sets $\cU'$ and $\cU$, if $\cU' \subset \cU$ then it follows immediately that a $\lambda$-violating solution under $\cU$ is also a $\lambda$-violating solution under $\cU'$. However, we note that while one may always expand the uncertainty set to ensure fairness for higher noise values it would come at an expense. Specifically, a robust solution would have to ensure fairness over a larger uncertainty set and that would in general lead the optimization objective (which is the clustering cost/quality) to be degraded.

\section{Algorithm and Theoretical Analysis}  \label{sec:analysis}
We start this section by making a collection of mathematical observations that are essential for our algorithm. First, note that the size of the uncertainty set $|\cU|$ can grow exponentially in the size of the dataset $n$. For example, for the two color case with $\mhin  = \mhout = m$ for both colors, the size of $\cU$ is $\sum_{0\leq i,j\leq m} \binom{n_{1}}{i}\binom{n_2}{j} \geq  \big( \frac{n}{2m} \big)^m$  where $n_1$ and $n_2$ are the number of points for the first and second color.
For a reasonable choice of $m=\alpha n$ where $\alpha$ is a fractional constant in $(0,1)$, it is straight forward to see that the size of $|\cU|=\Omega(c^n)$ where $c$ is a constant strictly greater than $1$. This implies that we can have an exponential set of constraints in \eqref{eq:robust_fair}. Our first critical observation is that we can replace this exponential set by an equivalent polynomially sized set of constraints. 


\begin{restatable}{lemma}{polyconstraints}\label{lemma:poly_constraints}
For any instance of \rfc{} the constraints \eqref{eq:robust_fair} are equivalent to \eqref{eq:ub} and \eqref{eq:lb}.
\begin{subequations} \small
   \begin{equation}
 \forall i\in S, h\in \cH:  \frac{|C_{i,h}({\chi})| + \mhin }{ |C_i|} \leq  u_h  \label{eq:ub} 
 \end{equation} 
 \begin{equation}
        \forall i\in S, h\in \cH:  \frac{|C_{i,h}({\chi})| -\mhout }{ |C_i|}  \geq  l_h \label{eq:lb}
 \end{equation}
\end{subequations}
\end{restatable}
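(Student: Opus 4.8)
The plan is to prove the two implications separately and to argue everything locally, cluster-by-cluster and color-by-color: both constraint families are indexed by a pair $(i,h)$ with $i\in S$, $h\in\cH$, and the quantity $|C_{i,h}(\hat{\chi})|$ depends only on how a coloring $\hat{\chi}$ colors the \emph{fixed} point set $C_i$ (the partition into clusters is determined by $\phi$, independently of the coloring). So it suffices to characterize, for each $(i,h)$, the range of $|C_{i,h}(\hat{\chi})|$ as $\hat{\chi}$ ranges over $\cU$.

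For the direction \eqref{eq:ub}--\eqref{eq:lb} $\Rightarrow$ \eqref{eq:robust_fair}, I would fix an arbitrary $\hat{\chi}\in\cU$ and a pair $(i,h)$ and write the disjoint decomposition $\hat{\cP}_h\cap C_i=(\hat{\cP}_h\cap\cP_h\cap C_i)\sqcup\big((\hat{\cP}_h\setminus\cP_h)\cap C_i\big)$. The first part has size at most $|\cP_h\cap C_i|=|C_{i,h}(\chi)|$, and the second has size at most $|\hat{\cP}_h\setminus\cP_h|\le m_h^+$ by \eqref{eq:pos}, giving $|C_{i,h}(\hat{\chi})|\le |C_{i,h}(\chi)|+m_h^+$. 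A symmetric decomposition of $\cP_h\cap C_i$ combined with \eqref{eq:neg} yields $|C_{i,h}(\hat{\chi})|\ge |C_{i,h}(\chi)|-m_h^-$. Dividing by $|C_i|$ and invoking \eqref{eq:ub} and \eqref{eq:lb} gives $l_h\le |C_{i,h}(\hat{\chi})|/|C_i|\le u_h$, which is exactly \eqref{eq:robust_fair}. This direction needs only the membership $\hat{\chi}\in\cU$ and holds for any number of groups.

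For the converse I would argue by contrapositive: if \eqref{eq:ub} fails for some $(i,h)$ I must exhibit a witness $\hat{\chi}\in\cU$ with $|C_{i,h}(\hat{\chi})|/|C_i|>u_h$, and symmetrically if \eqref{eq:lb} fails. The natural constructions are the extremal colorings: for the upper bound, recolor non-$h$ points of $C_i$ into color $h$ until either $m_h^+$ points have moved or $C_i$ is monochromatic in $h$ (targeting $|C_{i,h}(\hat{\chi})|=\min\{|C_{i,h}(\chi)|+m_h^+,\,|C_i|\}$); for the lower bound, recolor up to $m_h^-$ of the $h$-points of $C_i$ out to other groups. The key step is verifying that the resulting $\hat{\chi}$ respects \eqref{eq:neg} and \eqref{eq:pos} for \emph{every} group at once, and this is exactly where the consistency inequalities are used: \eqref{eq:plus} guarantees the $m_h^+$ incoming recolorings can be charged against the loss budgets $m_g^-$ of the source groups, while \eqref{eq:minus} guarantees the $m_h^-$ outgoing recolorings can be absorbed by the gain budgets $m_g^+$ of the destination groups.

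The main obstacle is this realizability check on the \emph{upper-bound} side. Pushing $m_h^+$ points into group $h$ within a single cluster $C_i$ requires that $C_i$ actually contain enough non-$h$ points whose reassignment overspends no individual source budget $m_g^-$ --- a per-group feasibility condition, stronger than the aggregate bound $m_h^+\le\sum_{g\neq h}m_g^-$ of \eqref{eq:plus}. I expect to discharge it with a small transportation/Hall-type argument, distributing the recolorings greedily across source groups up to $\min\{|C_{i,g}(\chi)|,m_g^-\}$, and to record the clean special case of two groups, where $m_h^+=m_g^-$ forces the construction to either hit the bound exactly or drive $C_i$ monochromatic (ratio $1>u_h$), making the equivalence tight. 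The lower-bound side is easier, since the destination budgets $m_g^+$ are global and unconstrained by the contents of $C_i$, so the outgoing recoloring is always realizable up to $\min\{|C_{i,h}(\chi)|,m_h^-\}$.
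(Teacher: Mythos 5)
Your first direction (\eqref{eq:ub}--\eqref{eq:lb} $\Rightarrow$ \eqref{eq:robust_fair}) is correct and is essentially the paper's argument for that direction, written with a cleaner explicit set decomposition. The genuine gap is in the converse, at exactly the obstacle you flagged and deferred: the per-group realizability check for the upper-bound witness \emph{cannot} be discharged by a greedy/Hall-type argument, and the dichotomy ``hit $m_h^+$ or drive $C_i$ monochromatic'' is false once there are three or more groups. Concretely, take groups $\{h,g_1,g_2\}$ with $m_h^+=15$, $m_{g_1}^-=m_{g_2}^-=10$ (so \eqref{eq:plus} holds), and a cluster with $|C_{i,h}(\chi)|=5$, $|C_{i,g_1}(\chi)|=2$, $|C_{i,g_2}(\chi)|=100$, hence $|C_i|=107$. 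Your greedy recoloring moves at most $\min\{2,10\}+\min\{100,10\}=12<15$ points into color $h$, and the cluster is nowhere near monochromatic; so over all $\hat{\chi}\in\cU$ the color-$h$ count in $C_i$ is at most $17$, while \eqref{eq:ub} compares $u_h|C_i|$ against $20$. For any $u_h$ with $17/107\le u_h<20/107$, constraint \eqref{eq:ub} fails for $(i,h)$, yet no coloring in $\cU$ violates the upper fairness constraint for $(i,h)$: the witness your proof requires does not exist.

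The equivalence survives only because of a different witness, which your proposal never constructs: whenever the greedy falls short of $m_h^+$, some source group $g$ must have $|C_{i,g}(\chi)|<m_g^-$ (here $g_1$), and then by \eqref{eq:minus} one can recolor \emph{all} of $C_{i,g}$ out of group $g$ (the aggregate receiving budgets absorb these points), driving that group's proportion in $C_i$ to $0<l_g$ and violating the \emph{lower} fairness constraint for the pair $(i,g)$. So \eqref{eq:robust_fair} fails anyway, but at a different pair than $(i,h)$, and this step uses the standing assumption $l_g>0$ essentially: if $l_{g_1}=0$ were allowed, the example above would refute the lemma outright. This missing ingredient is precisely the paper's \cref{obs:ci_lb}, which shows that any robust fair solution satisfies $|C_{i,g}(\chi)|>m_g^-$ for every cluster and every group; the paper proves that observation first (from the lower-bound fairness constraints) and then argues this direction of the lemma \emph{directly} rather than by contrapositive, since the observation guarantees every source group has enough points in every cluster and the worst-case added mass is exactly $\mhin$. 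Your proof becomes correct once you add the case split: either $\sum_{g\neq h}\min\{|C_{i,g}(\chi)|,m_g^-\}\ge m_h^+$ and your upper-bound witness goes through, or some group is deficient and the lower-bound witness for that group finishes the argument.
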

We call the constraints \eqref{eq:ub} and \eqref{eq:lb} the \emph{robust fairness} constraints. To see what the lemma means, consider a color $h$ and its upper proportion bound $u_h$. The lemma essentially states that instead of ensuring that the solution satisfies $\frac{|C_{i,h}(\hat{\chi}) | }{ |C_i|} \leq  u_h$ for all colorings $\hat{\chi} \in \cU$ as done in \eqref{eq:robust_fair}, we may instead take $C_{i,h}({\chi})$ (which uses the given coloring $\chi$) and add the highest (worst-case) increase in the number of points that can be gained by color $h$ which is $\mhp$ and satisfy a single constraint of $\frac{|C_{i,h}({\chi})| + \mhin }{ |C_i|} \leq  u_h$ in \eqref{eq:ub} instead. A similar statement can be made about the lower bound $l_h$ in \eqref{eq:lb}.

As a result of \cref{lemma:poly_constraints}, it follows that a \(\lambda\)-violating solution of constraint \eqref{eq:robust_fair_viol} is only required to satisfy the following reduced constraints.
\begin{subequations} \small
   \begin{equation}
 \forall i\in S, h\in \cH:  \frac{|C_{i,h}({\chi})| + \mhin }{ |C_i|} \leq  u_h +\lambda \label{eq:ub-lamb} 
 \end{equation} 
 \begin{equation}
        \forall i\in S, h\in \cH:  \frac{|C_{i,h}({\chi})|- \mhout }{ |C_i|}  \geq  l_h  -\lambda \label{eq:lb-lamb}
 \end{equation}
\end{subequations}
Another critical observation is that upper and lower proportion bounds that would have a non-empty set of feasible solutions in ordinary fair clustering might lead to an infeasible \rfc{} instance unless the bounds are relaxed by a sufficient margin. To see that, consider an instance of \rfc{} with two red and two blue points and noise parameters $ m_{\text{red}}^+ = m_{\text{blue}}^+ = m = 1 $. If we set the proportion bounds to $ l_{\text{red}} = l_{\text{blue}}=1/2$ and $u_{\text{red}} = u_{\text{blue}}= 1/2 $, then clearly we would have an ordinary (non-robust) fair clustering solution. However, one can see through \cref{lemma:poly_constraints} that no feasible robust fair solution exists. Now, if we relax the bounds to $ l_{\text{red}} = l_{\text{blue}} = 1/3 $ and $ u_{\text{red}} = u_{\text{blue}} = 2/3 $, then a single cluster containing all four points becomes a feasible solution. More formally, the proportions bounds have to be relaxed exactly as shown in the following observation:
\begin{restatable}{observation}{ublbobs}\label{obs:ublb}
    For any instance of the \rfc{} problem, a feasible solution exists if and only if for every group $h\in \cH$, $u_h \ge \frac{\nh + \mhin}{n}$ and $l_h \leq \frac{\nh - \mhout}{n}$.
\end{restatable}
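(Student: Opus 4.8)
The plan is to prove the two directions of the ``if and only if'' separately, in both cases working with the reduced per-cluster constraints \eqref{eq:ub} and \eqref{eq:lb} supplied by \cref{lemma:poly_constraints} rather than the exponential family \eqref{eq:robust_fair}. This reduction is what makes the statement tractable: \cref{lemma:poly_constraints} collapses feasibility over the entire uncertainty set into two inequalities per cluster and color, expressed purely in terms of the given coloring $\chi$.

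For the sufficiency (``if'') direction I would exhibit an explicit feasible solution under the stated bounds. The natural witness is the single-cluster solution: choose one center and let $\phi$ assign all of $\cP$ to it, yielding one nonempty cluster $C_1 = \cP$ with $|C_1| = n$ and $|C_{1,h}(\chi)| = \nh$ for every $h$. Substituting into \eqref{eq:ub} and \eqref{eq:lb}, the two constraints read exactly $\frac{\nh + \mhin}{n} \le u_h$ and $\frac{\nh - \mhout}{n} \ge l_h$, which are precisely the hypotheses. Since $|S| = 1 \le k$, this is a valid \rfc{} solution (its clustering radius is irrelevant to the existence question), so feasibility follows.

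For the necessity (``only if'') direction I would take an arbitrary feasible solution and aggregate. Let its nonempty clusters be $C_1, \dots, C_{k'}$ with $1 \le k' \le k$; I restrict to nonempty clusters because the fairness ratios are undefined when $|C_i| = 0$, and since $\cP$ is nonempty at least one cluster is nonempty, so $k' \ge 1$ always holds. By \cref{lemma:poly_constraints} each such cluster satisfies $|C_{i,h}(\chi)| + \mhin \le u_h |C_i|$ and $|C_{i,h}(\chi)| - \mhout \ge l_h |C_i|$. Summing each inequality over $i \in \{1,\dots,k'\}$ and using $\sum_i |C_{i,h}(\chi)| = \nh$ together with $\sum_i |C_i| = n$ gives $\nh + k'\mhin \le u_h n$ and $\nh - k'\mhout \ge l_h n$. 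Finally, because $k' \ge 1$ and $\mhin, \mhout \ge 0$, I weaken these to $u_h \ge \frac{\nh + \mhin}{n}$ and $l_h \le \frac{\nh - \mhout}{n}$, which is the claim.

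The only point requiring care is the treatment of empty clusters in the summation step, which is why I fix the count of nonempty clusters as $k' \ge 1$ rather than $k$; this is exactly the slack that lets me discard the factor $k'$ and land on the stated bounds. Once \cref{lemma:poly_constraints} is invoked, the remainder is a one-line counting argument, so I do not anticipate a substantive obstacle.
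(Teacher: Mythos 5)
Your proof is correct, and while the sufficiency direction coincides with the paper's (both exhibit the single all-points cluster as the feasible witness, with the proportion of every color $h$ under any realized coloring confined to $\left[\frac{\nh-\mhout}{n},\frac{\nh+\mhin}{n}\right]$), your necessity direction takes a genuinely different route. The paper never invokes \cref{lemma:poly_constraints} here; instead it argues directly over the uncertainty set: it considers the worst-case coloring $\hat{\chi}\in\cU$ in which group $h$ is inflated to $\nh+\mhin$ points (whose existence implicitly relies on the consistency conditions \eqref{eq:plus}--\eqref{eq:minus}), and then applies the mediant inequality (Fact~\ref{cauch_ineq}) to conclude that some cluster must have $h$-proportion at least $\frac{\nh+\mhin}{n}>u_h$ under $\hat{\chi}$, contradicting \eqref{eq:robust_fair}. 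You instead route through \cref{lemma:poly_constraints}, sum the reduced per-cluster constraints \eqref{eq:ub} and \eqref{eq:lb} over the $k'\geq 1$ nonempty clusters, and discard the slack from $k'$. Both are at heart aggregation arguments (Fact~\ref{cauch_ineq} is itself proved by the same kind of summation), but yours is more economical once \cref{lemma:poly_constraints} is in hand: it avoids re-reasoning about which colorings are actually achievable in $\cU$, and it even yields the quantitatively stronger intermediate bounds $u_h\geq\frac{\nh+k'\mhin}{n}$ and $l_h\leq\frac{\nh-k'\mhout}{n}$, showing that solutions with more clusters force more slack in the proportion bounds. The cost is a dependence on \cref{lemma:poly_constraints}, whose proof (via \cref{obs:ci_lb}) requires $l_h>0$ and $u_h<1$; since the paper's setup assumes $0<l_h\leq u_h<1$ and neither \cref{obs:ci_lb} nor \cref{lemma:poly_constraints} relies on \cref{obs:ublb}, there is no circularity and no gap.
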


\subsection{Our Algorithm: \rfcalg{}} \label{sec:alg_rfc}

To solve our problem we employ a two-stage approach where the initial stage selects the centers and the second stage assigns the points to the centers.  While various prior papers in fair clustering \citet{bera2019fair,bercea2018cost,esmaeili2020probabilistic,esmaeili2021fair} use a similar two-stage approach, the centers used in the first stage are selected by any vanilla (ordinary) $k$-center algorithm. In our case, it is actually critical that the centers are selected carefully by our algorithm (subroutine) \textsc{GetCenters}. In fact, in \cref{subsec:failure_of_vanilla} we show how using another algorithm would break a critical step in our proof. 

Since we are dealing with a $k$-center objective, the optimal radius (the maximum distance from any point to its assigned center) for \rfc{} belongs to a finite set of possible values, i.e., the set of $\binom{n}{2}$ distance values. Therefore, our subroutine \textsc{GetCenters} (Algorithm \ref{alg:filtercenters}) receives as input a ``guessed'' radius value $R$ along with the entire set of points $\cP$. \textsc{GetCenters} outputs a set of centers $S$. The procedure begins with all the points being unmarked, then in each iteration we add an arbitrary unmarked point $j$ to $S$, and mark the all the points at a distance of at most $2R$ from $j$ (this includes point $j$ as well). We repeat this step until all the points are marked.

\vspace{-0.2cm}
\begin{algorithm}[H]
  \caption{\textsc{GetCenters}} \small
\label{alg:filtercenters}
 \textbf{Input}: {Set of points $\cP$, and a radius $R$}\\
 \textbf{Output}: {Cluster centers $S$ }  \\
   \vspace{-0.4cm}
 \begin{algorithmic}[1] 
 \STATE $S \gets \emptyset$
    \WHILE{ $\cP \neq \emptyset$}
    \STATE Pick an arbitrary $j \in \cP$ and $S\gets S\cup \{j\}$
    \STATE $\cP \gets \cP \setminus \text{Ball}(j,2R)$  
    \ENDWHILE
    \RETURN $S$ 
 \end{algorithmic}
 \end{algorithm} 


We will show that the centers returned by \textsc{GetCenters} when run at a sufficiently large value of $R$ satisfy good properties that enable us to post-process them to obtain a \emph{robust fair} clustering. Before that, it is important to introduce the feasibility linear program (LP) which takes a radius value $R$ and a collection of centers $S$ and assigns the points in $\cP$ to centers in $S$. Since in a given fixed instance the set of centers $S$ and radius value $R$ can vary as inputs, we call it $\lp$, its full details are shown below. 
\begin{align} 
  {\lp:} \hspace{3.75cm} &  \nonumber \\ 
    \forall j \in \cP: \sum_{i \in S}x_{i,j} & =1 \label{lp:rfa} \\
    \forall i \in S, h \in \cH: \sum_{j \in \cP^h}x_{i,j} + \mhin &\leq u_h \sum_{j \in \cP}x_{i,j} \label{cons:ub_trf}  \\
    \forall i \in S,  h \in \cH: \sum_{j \in \cP^h}x_{i,j}- \mhout &\geq l_h \sum_{j \in \cP}x_{i,j}  \label{cons:lb_trf} \\
  \forall i \in S,  j\in \cP: \text{ if $d(i,j)>3R$: } x_{i,j} &=0,\text{else:}x_{i,j} \geq 0 \label{eq:endrfa}
\end{align}
For each point $j \in \cP$ and center $i \in S$, $\lp$ has a decision variable $x_{i,j} \in [0,1]$ which denotes the fractional assignment of point $j$ to center $i$ in $S$. $\lp$ is more easily interpreted by considering the integral values of $x_{i,j} \in \{0,1\}$ instead of $[0,1]$. Therefore, constraint \eqref{lp:rfa} simply states that each point should be assigned to exactly one center. Further, it follows that $\sum_{j \in \cP^h}x_{i,j} = |C_{i,h}({\chi})|$ and $\sum_{j \in \cP}x_{i,j}=|C_i|$, hence constraint \eqref{cons:ub_trf} is simply imposing constraint \eqref{eq:ub} of \cref{lemma:poly_constraints} to ensure that the upper proportion bounds are not violated. A similar reasoning follows for constraint \eqref{cons:lb_trf}. The last constraint \eqref{eq:endrfa} simply forbids assigning points $j$ to centers $i$ that are at a distance greater than $3R$, this is done by setting the assignment variables $x_{i,j}=0$ if the distance $d(i,j) > 3R$ and otherwise allowing it to be in $[0,1]$. Note that the $\lp$ receives $R$ as an input parameter but uses $3R$ in constraint \eqref{eq:endrfa}.  

$\lp$ is elaborate and in fact it is not difficult to see that it might not be feasible for an arbitrary set of centers $S$ and an arbitrary radius value $R$. Interestingly, we show that if \textsc{GetCenters} is run at a value of $R \ge R^*$ where $R^*$ is the optimal radius (clustering cost) value then the set of centers $\sgood$ returned by \textsc{GetCenters} satisfies this LP at radius $R$, i.e., $\text{LP}(\sgood,R)$ is \emph{feasible}. This is is shown in the following lemma. In fact, the lemma additionally shows that the number of centers in $\sgood$ is at most $k$, i.e., guaranteeing that we would not have more than $k$ centers. The main idea in the proof is to show that an optimal robust fair solution $(S^*,\phi^*)$ of cost $R^*$ can instead use the centers $\sgood$ at the expense of degrading the clustering cost to $3R^*$. This is done by moving clusters in $(S^*,\phi^*)$ to carefully chosen centers in $\sgood$.
Note that the proof is non-constructive as it assumes knowledge of the optimal solution.

\begin{restatable}{lemma}{silesskone}\label{claim:silessk}
Let the optimal clustering cost (radius) be $R^*$, then if we set $R \geq R^*$ then \textsc{GetCenters} (\cref{alg:filtercenters}) returns a set $\sgood$ such that
    (1) $|\sgood| \!\leq\! k$ and 
    (2) $\text{LP}(\sgood,R)$ is feasible.  
\end{restatable}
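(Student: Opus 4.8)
The plan is to use an optimal robust fair solution $(S^*,\phi^*)$ of cost $R^*$ (where $C^*_{i^*}$ denotes the set of points assigned to $i^*\in S^*$ by $\phi^*$) as a witness, and to prove both claims through a single ``transfer'' of this solution onto the centers $\sgood$. The one structural fact about \textsc{GetCenters} I would record first is its packing property: whenever a point is added to $\sgood$, every point within distance $2R$ of it is deleted, so any two centers of $\sgood$ lie at pairwise distance strictly greater than $2R$.

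For claim (1) I would argue that the map $c\mapsto\phi^*(c)$ embeds $\sgood$ into $S^*$. Suppose two distinct $c_1,c_2\in\sgood$ satisfied $\phi^*(c_1)=\phi^*(c_2)=i^*$. Since the optimal cost is $R^*\le R$, both $c_1$ and $c_2$ lie within $R^*$ of $i^*$, so the triangle inequality gives $d(c_1,c_2)\le 2R^*\le 2R$, contradicting the packing property. Hence the map is injective and $|\sgood|\le|S^*|\le k$.

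For claim (2) it suffices to exhibit one feasible \emph{integral} assignment for $\text{LP}(\sgood,R)$, since an integral solution is a fractional one. The idea is to move each optimal cluster $C^*_{i^*}$ wholesale onto a nearby center of $\sgood$, in two steps. Step one: for each $c\in\sgood$ assign the whole cluster $C^*_{\phi^*(c)}$ to $c$; this is well defined and injective by the embedding above, and since $c\in C^*_{\phi^*(c)}$ every point of that cluster is within $2R^*\le 3R$ of $c$. Step two: for each optimal center $i^*\in S^*$ not of the form $\phi^*(c)$, note that $i^*\in\cP$ was either chosen by \textsc{GetCenters} or deleted by some chosen center, so there is $c'\in\sgood$ with $d(i^*,c')\le 2R$; assign all of $C^*_{i^*}$ to $c'$, and each of its points lands within $R^*+2R\le 3R$ of $c'$. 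Together these assign each optimal cluster exactly once, respecting the $3R$ cutoff \eqref{eq:endrfa}, and \eqref{lp:rfa} holds because the optimal clusters partition $\cP$.

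The crucial point, and the step I expect to be the main obstacle, is verifying the robust fairness constraints \eqref{cons:ub_trf}--\eqref{cons:lb_trf} for the merged clusters, together with the subtlety that no center of $\sgood$ may be left empty (an empty cluster would violate \eqref{cons:ub_trf} whenever some $\mhin>0$, because of the additive $+\mhin$ term). Emptiness is exactly what Step one rules out: every $c\in\sgood$ receives at least its own cluster $C^*_{\phi^*(c)}$, which contains $c$. For fairness, each merged cluster $\tilde C_c$ is a union of $T\ge 1$ optimal clusters, and every optimal cluster obeys the robust fairness inequalities \eqref{eq:ub}--\eqref{eq:lb} of \cref{lemma:poly_constraints}. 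Summing the per-cluster upper-bound inequalities over the $T$ merged clusters yields $|\tilde C_{c,h}(\chi)|+T\,\mhin\le u_h|\tilde C_c|$; since $T\ge 1$ and $\mhin\ge 0$ this implies $|\tilde C_{c,h}(\chi)|+\mhin\le u_h|\tilde C_c|$, which is \eqref{cons:ub_trf}, and the symmetric summation with $-\mhout$ gives \eqref{cons:lb_trf}. Thus the merged assignment is feasible for $\text{LP}(\sgood,R)$, establishing claim (2).
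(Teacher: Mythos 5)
Your proposal is correct and takes essentially the same approach as the paper: part (1) uses the identical packing/injectivity argument, and part (2) constructs the same two-step ``transfer'' of the optimal clusters onto $\sgood$ (wholesale, non-constructively, with the $R^*+2R\le 3R$ cost bound) that the paper proves in \cref{lemma:existence_lemma}, then converts it to an integral LP solution. The only cosmetic difference is the fairness check on merged clusters: the paper bounds the merged ratio by the worst per-cluster ratio via the mediant inequality (Fact~\ref{cauch_ineq}), while you sum the per-cluster inequalities and use $T\ge 1$ with $\mhin,\mhout\ge 0$; both verifications are valid.
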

The above suggests that we may run \textsc{GetCenters} at different radius values $R$, by Lemma \ref{claim:silessk} once $R \ge R^*$ the returned centers $\sgood$ would have at most $k$ centers and since $\text{LP}(\sgood,R)$ would be feasible by running it we would obtain a feasible assignment. In fact, our algorithm \rfcalg{} (\cref{alg:lbfaircluster}) does that and uses binary search over the set of pairwise distances to find the smallest value of $R$ where the conditions of Lemma \ref{claim:silessk} are satisfied. The issue is that the feasible solution that we would obtain $\mathrm{x}^{\text{LP}}$ can be fractional, i.e., $\mathrm{x}^{\text{LP}}_{i,j} \in [0,1]$ and not necessarily $\in \{0,1\}$. Therefore, we would have to round these fractional values into valid integral ones $\mathrm{x}^{\text{Integ}}_{i,j} \in \{0,1\}$. The following lemma shows that using the \textsc{MaxFlow} rounding scheme \footnote{In short, in \textsc{MaxFlow} rounding we solve a network flow instance corresponding to a given clustering instance and fractional LP assignment $\mathrm{x}^{\text{LP}}$. Then an integral flow is found and used to construct the rounded integral assignment $\mathrm{x}^{\text{Integ}}$.}  \cite{bercea2018cost,dickerson2023doubly} (see \cref{sec:maxflow} for more details) we can obtain an integral assignment at no increase to the clustering cost and only for a slight change in the cluster sizes as shown in \cref{lemma:obj_rounded}.

\begin{lemma} \label{lemma:obj_rounded} 
Let $\mathrm{x}^{\text{Integ}}$ be the integral assignment that results from running \textsc{MaxFlow} rounding over a fractional assignment $\mathrm{x}^{\text{LP}}$, then  (1) if $\mathrm{x}^{\text{LP}}_{i,j}=0$ then $\mathrm{x}^{\text{Integ}}_{i,j}=0$ and  (2) for any center $i\in \hat{S}$ and group $h \in \cH$, 
\begin{align*}
      \floor{|C_i^{\text{LP}}|} \leq & |C_{i}^{\text{Integ}} | \leq  \ceil{|C_i^{\text{LP}}|}, \\ 
      \floor{|C_{i,h}^{\text{LP}}|} \leq &|C_{i,h}^{\text{Integ}}|\leq \ceil{|C_{i,h}^{\text{LP}}|}  
\end{align*}
where $|C_i^{\text{LP}}|= \sum_{j\in \cP}\mathrm{x}_{i,j}^{\text{LP}}$, $|C_i^{\text{Integ}}|= \sum_{j\in \cP}\mathrm{x}_{i,j}^{\text{Integ}}$,  $|C_{i,h}^{\text{LP}}|=\sum_{j\in \cP_h}\mathrm{x}_{i,j}^{\text{LP}}$, and  $|C_{i,h}^{\text{Integ}}|=\sum_{j\in \cP_h}\mathrm{x}_{i,j}^{\text{Integ}}$
\end{lemma}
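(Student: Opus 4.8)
The plan is to realize \textsc{MaxFlow} rounding as the search for an integral flow in an appropriately constructed network and then to invoke the integrality of the network-flow polytope (total unimodularity) to pass from the fractional assignment $\mathrm{x}^{\text{LP}}$ to an integral one $\mathrm{x}^{\text{Integ}}$ without perturbing any cluster marginal by more than a single rounding. First I would build a directed network $G$ with a source $s$, a sink $t$, a node $p_j$ for each point $j \in \cP$, a node $q_{i,h}$ for each center--color pair $(i,h) \in \hat{S}\times\cH$, and a node $c_i$ for each center $i \in \hat{S}$. The edges are: $s \to p_j$ with lower and upper bound both equal to $1$ (forcing every point to be fully assigned); an edge $p_j \to q_{i,\chi(j)}$ of capacity $1$ whenever $\mathrm{x}^{\text{LP}}_{i,j} > 0$; an edge $q_{i,h} \to c_i$ with lower bound $\floor{|C_{i,h}^{\text{LP}}|}$ and upper bound $\ceil{|C_{i,h}^{\text{LP}}|}$; and an edge $c_i \to t$ with lower bound $\floor{|C_i^{\text{LP}}|}$ and upper bound $\ceil{|C_i^{\text{LP}}|}$. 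Routing each point only through the node indexed by its own color $\chi(j)$ is exactly what keeps the per-group counts well defined after rounding.

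Next I would verify that $\mathrm{x}^{\text{LP}}$ induces a \emph{feasible} fractional flow of value $n$: send flow $1$ on each $s \to p_j$, flow $\mathrm{x}^{\text{LP}}_{i,j}$ on $p_j \to q_{i,\chi(j)}$, flow $|C_{i,h}^{\text{LP}}|$ on $q_{i,h}\to c_i$, and flow $|C_i^{\text{LP}}|$ on $c_i \to t$. Constraint \eqref{lp:rfa} gives conservation at each $p_j$, while the identities $|C_{i,h}^{\text{LP}}| = \sum_{j\in \cP_h}\mathrm{x}^{\text{LP}}_{i,j}$ and $|C_i^{\text{LP}}| = \sum_{h\in\cH}|C_{i,h}^{\text{LP}}|$ give conservation at $q_{i,h}$ and at $c_i$; these flow values lie between their imposed floors and ceilings by definition, so all lower/upper bounds are respected. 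Since the node--arc incidence matrix of a flow network is totally unimodular and all lower and upper bounds are integers, the feasible-flow polytope is integral; as it is nonempty (the above fractional flow is a witness), it admits an integral feasible flow $f$, which necessarily also has value $n$ because the $s$-edges are pinned to $1$.

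I would then define $\mathrm{x}^{\text{Integ}}_{i,j}$ to be the value of $f$ on $p_j \to q_{i,\chi(j)}$. Because each $s\to p_j$ carries exactly one unit and each out-edge of $p_j$ has capacity $1$, conservation forces every point to send its unit along a single edge, so $\mathrm{x}^{\text{Integ}}$ is a valid integral assignment. Property~(1) is immediate: the edge $p_j \to q_{i,\chi(j)}$ is present only when $\mathrm{x}^{\text{LP}}_{i,j}>0$, so $\mathrm{x}^{\text{LP}}_{i,j}=0$ leaves no edge for integral flow and forces $\mathrm{x}^{\text{Integ}}_{i,j}=0$. Finally, $|C_{i,h}^{\text{Integ}}|$ is the integral flow on $q_{i,h}\to c_i$ and $|C_i^{\text{Integ}}|$ the integral flow on $c_i\to t$, each of which respects its lower and upper bound; this yields exactly the sandwich inequalities $\floor{|C_{i,h}^{\text{LP}}|} \le |C_{i,h}^{\text{Integ}}| \le \ceil{|C_{i,h}^{\text{LP}}|}$ and $\floor{|C_i^{\text{LP}}|} \le |C_i^{\text{Integ}}| \le \ceil{|C_i^{\text{LP}}|}$.

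The step I expect to require the most care is confirming that the lower-bounded flow feasibility is not vacuous, i.e., that an integral flow of value $n$ truly exists despite the imposed lower bounds. This amounts to the per-color balance $\sum_{i\in\hat{S}}\floor{|C_{i,h}^{\text{LP}}|} \le n_h \le \sum_{i\in\hat{S}}\ceil{|C_{i,h}^{\text{LP}}|}$, which holds since $\sum_{i}|C_{i,h}^{\text{LP}}| = n_h$ is an integer, together with the observation that restricting $G$ to edges with $\mathrm{x}^{\text{LP}}_{i,j}>0$ cannot destroy feasibility because the fractional witness flow already lives entirely on that subgraph. Once this is in place, the integrality argument is routine and the claimed bounds follow directly from reading marginals off the integral flow.
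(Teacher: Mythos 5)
Your proposal is correct and follows essentially the same route as the paper: you build the identical layered network (source, point nodes, center--color nodes, center nodes, sink) with floor/ceiling lower and upper bounds on the center--color and center arcs, exhibit $\mathrm{x}^{\text{LP}}$ as a feasible fractional flow of value $n$, and invoke integrality of the flow polytope with integer bounds to extract $\mathrm{x}^{\text{Integ}}$, reading off both claims from the arc bounds and the absence of arcs where $\mathrm{x}^{\text{LP}}_{i,j}=0$. The only cosmetic differences are that you pin the source arcs to exactly $1$ (the paper uses capacity $1$ plus maximality) and you spell out the total-unimodularity justification that the paper cites as a standard max-flow result.
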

The fact that the clustering cost would not increase should be clear from guarantee (1) of the above lemma as it implies that $\mathrm{x}^{\text{Integ}}$ will only assign points $j$ to centers $i$ where they already had a non-zero assignment in the fractional solution, i.e., $\mathrm{x}^{\text{LP}}_{i,j}>0$. The integral assignment $\mathrm{x}^{\text{Integ}}$ can immediately be used to construct the assignment function $\hat{\phi}: \cP \to \hat{S}$. Therefore, our final solution is $(\hat{S},\hat{\phi})$.  

All that remains is the final guarantee on the solution $(\hat{S},\hat{\phi})$. While it is clear that the radius is at most $3R^*$, the theorem below also shows that the violation in the fairness constraints is also bounded by a small value. Formally, we have  the following theorem.

\begin{restatable}{theorem}{finaltheorem}
\label{thm:final}
\rfcalg{} (\cref{alg:lbfaircluster}) is a $3$-approximation algorithm for the \rfc{} with fairness violation $\lambda= \frac{2}{\outm}$. 
\end{restatable}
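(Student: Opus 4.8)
The plan is to assemble the three structural results already established: \cref{claim:silessk} bounds the radius and supplies a feasible fractional assignment, \cref{lemma:obj_rounded} controls the error introduced by \textsc{MaxFlow} rounding, and \cref{lemma:poly_constraints} reduces the exponentially many robust-fairness requirements to the two polynomial constraints \eqref{eq:ub-lamb}--\eqref{eq:lb-lamb}. Accordingly I would split the argument into a radius bound and a fairness-violation bound. For the radius, \rfcalg{} binary-searches the $\binom{n}{2}$ pairwise distances for the smallest $R$ at which \textsc{GetCenters} returns $\sgood$ with $|\sgood|\le k$ and $\text{LP}(\sgood,R)$ feasible. Since the optimal radius $R^*$ is itself a pairwise distance and satisfies both conditions by \cref{claim:silessk}, the search returns some $R\le R^*$. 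Constraint \eqref{eq:endrfa} only assigns a point to a center within distance $3R$, and guarantee (1) of \cref{lemma:obj_rounded} ensures the rounding never activates an assignment that was zero in $\mathrm{x}^{\text{LP}}$; hence every point of the final solution $(\sgood,\hphi)$ lies within $3R\le 3R^*$ of its center, giving a $3$-approximation.

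The quantitative heart of the fairness bound is a lower bound on every cluster's size. From the LP constraint \eqref{cons:lb_trf} we have $|C_{i,h}^{\text{LP}}|-\mhout\ge l_h|C_i^{\text{LP}}|\ge 0$, so $|C_{i,h}^{\text{LP}}|\ge \mhout$ for each color $h$; summing over $h$ and using that the colors partition $\cP$ gives $|C_i^{\text{LP}}|\ge \outm$. Because $\outm$ is an integer, guarantee (2) of \cref{lemma:obj_rounded} then yields $|C_i^{\text{Integ}}|\ge\floor{|C_i^{\text{LP}}|}\ge \outm$ for every nonempty cluster $i\in\sgood$.

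It remains to propagate the $\pm1$ rounding errors through the proportions. Write $a=|C_{i,h}^{\text{LP}}|$, $b=|C_i^{\text{LP}}|$ and let $a',b'$ be their integral counterparts. \cref{lemma:obj_rounded} gives $a'\le a+1$ and $b\le b'+1$; combining with the LP constraint \eqref{cons:ub_trf}, i.e.\ $a+\mhin\le u_h b$, and $u_h<1$,
\[ \frac{a'+\mhin}{b'}\le\frac{u_h b+1}{b'}\le\frac{u_h(b'+1)+1}{b'}=u_h+\frac{u_h+1}{b'}\le u_h+\frac{2}{b'}\le u_h+\frac{2}{\outm}. \]
Symmetrically, using $a'\ge a-1$, $b\ge b'-1$, the constraint \eqref{cons:lb_trf} $a-\mhout\ge l_h b$, and $l_h<1$,
\[ \frac{a'-\mhout}{b'}\ge\frac{l_h b-1}{b'}\ge\frac{l_h(b'-1)-1}{b'}=l_h-\frac{l_h+1}{b'}\ge l_h-\frac{2}{\outm}. \]
These are exactly the reduced constraints \eqref{eq:ub-lamb} and \eqref{eq:lb-lamb} with $\lambda=2/\outm$, so by \cref{lemma:poly_constraints} the rounded solution is $\lambda$-violating for the full robust-fairness constraints \eqref{eq:robust_fair_viol}. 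Together with the radius bound this establishes the theorem.

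The main obstacle—and the reason the violation is $2/\outm$ rather than $1/\outm$—is the final step, where the numerator and denominator of the proportion are simultaneously perturbed by rounding in the directions that each hurt the bound, contributing one unit of error apiece. The device that keeps the violation small is the cluster-size lower bound $|C_i|\ge\outm$, which converts the unavoidable additive $O(1)$ rounding error into a proportional error of only $2/\outm$. I would also handle the degenerate case $\outm=0$ separately (by \eqref{eq:plus} this forces $\mhin=0$ as well, so the instance has no noise and $\lambda$ is read as $0$), and confirm that the binary-search predicate is monotone in $R$ so that the search is well defined and indeed returns a value $R\le R^*$.
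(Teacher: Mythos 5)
Your proof is correct and follows essentially the same route as the paper's: both bound the radius by $3R\le 3R^*$ via constraint \eqref{eq:endrfa} plus part (1) of \cref{lemma:obj_rounded}, establish the cluster-size lower bound $|C_i|\ge\outm$ (you derive it directly from \eqref{cons:lb_trf}, the paper via \cref{obs:ci_lb}), and propagate the $\pm 1$ rounding errors to get violations of $\frac{u_h+1}{\outm}$ and $\frac{l_h+1}{\outm}$, concluding with $u_h,l_h<1$. The only quibble is your closing aside on $\outm=0$: there the bound $2/\outm$ is simply vacuous, and $\lambda$ cannot be ``read as $0$'' since \textsc{MaxFlow} rounding can still introduce violations of order $1/|C_i|$ on small clusters.
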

From the above theorem it is clear that for large values of $\outm$ the violations would become smaller. In fact, if $\outm = \omega(1)$ then  $\lambda \to 0$ as $n \to \infty$.


 \begin{algorithm}[H]
  \caption{\rfcalg{}}
\label{alg:lbfaircluster}
 \textbf{Input}: {An instance of \fc{}, ${\mhin, \mhout}_{h \in \cH}$. }\\
\textbf{Output}: {Clustering of points $(\hat{S},\hat{\phi})$} \\
  \vspace{-0.5cm}
 \begin{algorithmic}[1] 
 \STATE Perform binary search to find the smallest radius $R$ for which the set $S$ returned by \textsc{GetCenters}($\cP, R$) has at most $k$ centers and $\lp$ is feasible.   
 \STATE We call the set of centers returned at the smallest radius $\hat{S}$ and we call its associated fractional assignment $\mathrm{x}^{\text{LP}}$. 
 \STATE Round $\mathrm{x}^{\text{LP}}$ to $\mathrm{x}^{\text{Integ}}$ using \textsc{MaxFlow} rounding (see full details in \cref{sec:maxflow}). 
 \item Construct the assignment $\hat{\phi}: \cP \to \hat{S}$ as follows, for each $j \in \cP$ set $\hat{\phi}(j) = i$ if $\mathrm{x}^{\text{Integ}}_{i,j}=1$. 
 \RETURN $(\hat{S},\hat{\phi})$
 \end{algorithmic}
 \end{algorithm}

\subsection{Failure When Using a Vanilla Clustering Algorithm}  \label{subsec:failure_of_vanilla}
Here we show that $\lp$ would not be feasible using centers selected by a vanilla clustering algorithm $\vS$\footnote{By a vanilla clustering algorithm we mean one which has some $\alpha$ approximation ratio for the ordinary clustering objective.}. This is shown in the theorem below. The main idea behind this is that a vanilla clustering algorithm lacks adaptivity and therefore may select too many centers and since the constraints in $\lp$ (specifically \eqref{cons:ub_trf} and \eqref{cons:lb_trf}) implicitly impose a lower bound on the cluster size there would not be enough points to assign to each center. While closing a subset of centers in $\vS$ might lead to a feasible $\text{LP}(\vS,R)$, knowing which ones to close without degrading the clustering cost is not straightforward to do. Our algorithm \textsc{GetCenters} avoids all of this and gives a simple way to find the set of centers and construct the final clustering solution. 
\begin{restatable}{theorem}{vanillabad}\label{theorem:vanilla_not_good}
Given a set of centers selected by a vanilla $k$-center algorithm $\vS$ then for any arbitrarily large $R$, there may not exist a feasible solution to $\text{LP}(\vS,R)$.
\end{restatable}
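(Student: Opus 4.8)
The plan is to exhibit an explicit feasible \rfc{} instance on which any vanilla $k$-center algorithm is \emph{forced} to select a center set $\vS$ that makes $\text{LP}(\vS,R)$ infeasible for every $R$. The driving observation is that the robust fairness constraints forbid clusters from being empty or too small. Fixing any center $i\in\vS$ and color $h$, constraint \eqref{cons:ub_trf} reads $\sum_{j\in \cP^h}x_{i,j}+\mhin \le u_h\sum_{j\in\cP}x_{i,j}$, so if $\sum_{j}x_{i,j}=0$ we would need $\mhin\le 0$, a contradiction whenever $\mhin>0$; and if $\sum_j x_{i,j}>0$ we get $\sum_j x_{i,j}\ge \mhin/u_h$. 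Hence every center in $\vS$ must receive at least $\beta:=\max_{h\in\cH}\mhin/u_h$ units of assignment. Combined with the partition constraint \eqref{lp:rfa}, which forces $\sum_{i\in\vS}\sum_{j}x_{i,j}=n$, this yields the counting obstruction: $\text{LP}(\vS,R)$ can be feasible only if $|\vS|\cdot\beta\le n$. Crucially, this argument never invokes the distance constraint \eqref{eq:endrfa}, so it is uniform in $R$ (increasing $R$ only relaxes \eqref{eq:endrfa}); if the LP is infeasible for unrestricted assignment it is infeasible for every finite $R$.

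It then remains to build an instance forcing $|\vS|\cdot\beta>n$. I would take $k=4$ and $\cP=\{p_1,p_2,p_3,p_4\}$ to be four distinct points, two colored red and two blue, with noise parameters $\mhin=\mhout=1$ for every color $h\in\cH$ (consistent with \eqref{eq:plus}--\eqref{eq:minus} and with $\mhout\le n_h$), and proportion bounds $u_{\mathrm{red}}=u_{\mathrm{blue}}=3/4$, $l_{\mathrm{red}}=l_{\mathrm{blue}}=1/4$. By \cref{obs:ublb} this is a genuinely \rfc{}-feasible instance; indeed the single cluster containing all four points satisfies the robust fairness constraints with equality, so the problem has a finite optimal radius. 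For these parameters $\beta=\max_h \mhin/u_h=1/(3/4)=4/3>1$, so $n/\beta = 4/(4/3)=3$.

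Next I would pin down what a vanilla algorithm must output. Since $|\cP|=k=4$ and the points are distinct, the ordinary $k$-center optimum places one center on each point and achieves radius $0$. Any algorithm with a finite approximation ratio $\alpha$ must then also achieve radius $\le \alpha\cdot 0 = 0$, which is possible only if every point is opened as a center; hence $\vS=\cP$ and $|\vS|=4$. Applying the counting obstruction, feasibility of $\text{LP}(\vS,R)$ would require $4=|\vS|\le n/\beta = 3$, a contradiction. Therefore $\text{LP}(\vS,R)$ is infeasible for all $R$, including arbitrarily large $R$, proving the theorem.

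The main thing to get right is the non-obvious fact that the LP has no mechanism to ``deactivate'' surplus centers: one might expect that an over-supplied center set could simply leave redundant centers empty, but the $+\mhin$ slack in \eqref{cons:ub_trf} renders an empty cluster infeasible, so all $|\vS|$ centers must be populated simultaneously. The remaining care is bookkeeping: checking that the chosen noise parameters satisfy the consistency inequalities \eqref{eq:plus}--\eqref{eq:minus}, and that $(l_h,u_h)$ keep the instance \rfc{}-feasible via \cref{obs:ublb}, so that the obstruction is attributable to the vanilla center selection rather than to an infeasible instance. One may optionally replace the degenerate choice $n=k$ by $n$ points grouped into $k$ tightly packed, well-separated clusters with $n<\beta k$; the identical argument then shows vanilla opens one center per group while the counting obstruction persists.
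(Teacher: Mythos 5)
Your proof is correct, and it rests on the same core obstruction as the paper's: the robust fairness constraints of $\text{LP}(\vS,R)$ force every opened center to receive a minimum amount of fractional assignment, independently of $R$ (the distance constraint \eqref{eq:endrfa} plays no role), so a center set that is too large renders the LP infeasible for every radius. The executions differ in two ways. First, the paper extracts the minimum cluster size from the lower-bound constraint \eqref{cons:lb_trf} --- each center must receive strictly more than one unit of each color, hence strictly more than two units in total --- on an instance with $n=4$, $k=2$, and two well-separated pairs, where a vanilla algorithm selects one center per pair; you extract it from the upper-bound constraint \eqref{cons:ub_trf}, giving $\sum_{j} x_{i,j}\ge \mhin/u_h$ for every opened center, and work with an instance where $n=k=4$. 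Second, and more substantively, the paper merely asserts a center selection that a vanilla algorithm may output (which suffices for the theorem's ``may not exist'' phrasing), whereas your OPT$=0$ trick proves that \emph{any} algorithm with a finite approximation ratio is forced to open all four points as centers, so on your instance the failure is unavoidable rather than merely possible --- a genuine strengthening. The mild cost is degeneracy ($n=k$ and optimal vanilla radius zero); your closing remark about $k$ tightly packed, well-separated groups with $n$ strictly smaller than $k\cdot\max_h \mhin/u_h$ removes that degeneracy and is also sound, since any finite-ratio algorithm must then place exactly one center in each group.
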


\section{Experiments}\label{sec:experiments}
\begin{figure*}[h!]
\centering
  \includegraphics[width=1.0\textwidth]{Figures/experiments/euclidean_wide_large_m_m_vs_fairvio.png}
\caption{Plots for $k$-center objective and fairness violation as $m/n$ increases. Over half of \rfcalg{}'s pictured fairness violations are exactly zero; the rest fall between $10^{-5}$ and $10^{-3}$. The 95\% confidence interval around \probalg{} is shaded; however, it is faint because the fairness violations are very sharply concentrated around their plotted mean.}
    \label{fig:m-and-T-plots}
\end{figure*}
We conduct experiments on a commodity laptop (Ryzen 7 5800, 16GB RAM) using Python 3.6 and cplex 12.8 to solve LPs.
Additional details and plots are available in Appendix~\ref{app:experiments}.

\textbf{Datasets.} We experiment on three datasets from the UCI repository \citep{dua2017uci}: \adult, \bank, and \cens. The datasets have 32k, 32k and 4.5k points with 5, 3, and 66 numerical features, respectively. Further, the colors in each dataset, i.e., sensitive attributes, are respectively a binary sex, a binary marital status, and a membership in one of three age buckets. The distances between any pair of points is set to the Euclidean distance between their normalized numerical features.

\textbf{Parameters.} The experiments use noise parameters $\mhin = \mhout = m, \forall h \in \cH$, where $m$ ranges from $\frac{n}{100}$ to $\frac{n}{10}$. We set the proportions $l_h$ and $u_h$ to the respectively greatest and least values leading to feasible \rfc{} instances, in accordance with Observation~\ref{obs:ublb}. 
Therefore, $l_h$ and $u_h$ are the same across instances in the same dataset. The number of centers is fixed at $10$, i.e., $k=10$. 

We benchmark our proposed algorithm \rfcalg{} against two relevant baselines. Namely, \emph{probabilistic} fair clustering \citep{esmaeili2020probabilistic} and \emph{deterministic} fair clustering \citep{bera2019fair}. These three fair clustering algorithms are denoted as \rfcalg{}, \probalg{}, and \detalg{} in the plots. Note that probabilistic fair clustering is an algorithm for two colors only so it is tested solely on \adult and \bank.

We evaluate the algorithms on their attained $k$-center objectives and fairness violation given by \eqref{eq:robust_fair_viol}.
For deterministic (and robust) fair clustering we obtain fairness violations by directly finding the corruption of point colors leading to the greatest fairness violation as described in Inequalities~\eqref{eq:ub-lamb} and \eqref{eq:lb-lamb}. Specifically, for each color, up to $m$ points  can essentially be added or subtracted.

In the probabilistic fair clustering work of \citet{esmaeili2020probabilistic}, the noise model is different. Specifically, each point's color is corrupted with some probability. To ensure that our evaluation is fair we evaluate the probabilistic instances under their assumed noise model. Instances are set up so that each point's color is corrupted with probability $m/n$, leading to an \textit{expected} $m$ corruptions dataset-wide. In contrast, robust and deterministic fair clustering allow for up to $m$ corruptions in \emph{each color}.
Finally, we sample 200 realizations of colors and directly report the mean fairness violations. However, probabilistic fair clustering says nothing about point correlations. We exploit this fact as follows. First, sample $S_{i,h} \sim \operatorname{Bernoulli}(m/n)$ for each cluster $i$ and color $h$, and then, if $S_{i,h}=1$, corrupt the colors of all points of color $h$ assigned cluster $i$.
Note that this is a generous evaluation, since an adversarial color assignment (as done in deterministic and robust fair clustering) can only lead to a higher violation. 

 \begin{figure}[h!]
 \centering
   \begin{minipage}{0.52\textwidth}   
        \centering \includegraphics[width=1.0\textwidth]{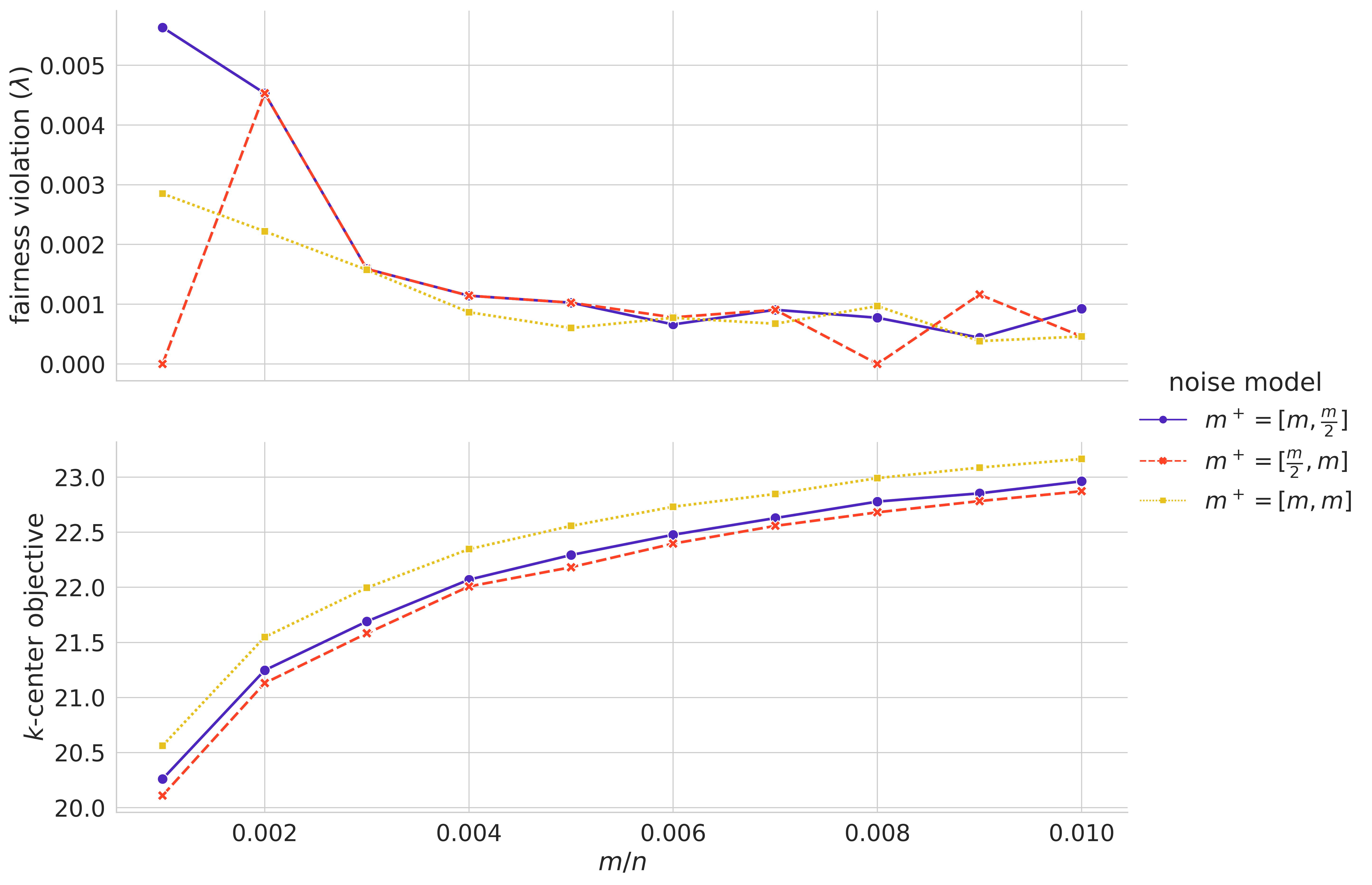}
        \caption{Comparison of the effect of increasing noise, given different noise model configurations.}
        \label{fig:error-models-plot}
    \end{minipage}
 \end{figure}
 
Figure~\ref{fig:m-and-T-plots} shows the results of our experiments.
\rfcalg{} \emph{maintains fairness violations of zero and near-zero} across the board; unlike deterministic and probabilistic baselines which have fairness violations as large as $0.6=60\%$.
In fact, in \detalg{} the post-corruption ratio of some colors becomes 0 in \adult  and in \bank (i.e., a color is completely absent from the cluster) or 1 in \cens (i.e., a color is fully dominating the cluster). Note that these are the worst-possible fairness violations in all three datasets. \probalg{} nearly hits these worst-case violations as well.


The objective (clustering cost) of \rfcalg{} is greater and increases with $m/n$, as expected when targeting a more stringent notion of robustness.
The break in the objective plot of Figure~\ref{fig:m-and-T-plots} occurs when \rfcalg{} opens fewer centers.
As $m$ increases, centers must have a greater number of points and thus fewer centers can receive points; otherwise applying all $m$ corruptions to the smallest center results in (nearly) absent colors or (nearly) monochromatic centers, which produce large fairness violations.

Finally, Figure~\ref{fig:error-models-plot} concludes with experiments on \bank using three settings of noise parameters. 
We fix the $l_h$ and $u_h$ for all plots as described before. We denote the two colors in bank by $0$ and $1$. Further, we take $m$ from $\frac{1}{1000}n$ to $\frac{1}{100}n$ but consider the $(m^+_0, m^+_1)$ choices of $(m, m/2)$, $(m/2, m)$, and $(m, m)$.
The experiments validate our intuitions.
First, $(m,m)$ achieves the highest objective as it has the most corruptions (largest uncertainty set).
Moreover, $(m/2, m)$ and $(m, m/2)$ both have the same number of total corruptions and their objectives are indeed comparable.
In all cases, the fairness violations border on zero, agreeing with Theorem~\ref{thm:final}.


\bibliographystyle{abbrvnat}
\bibliography{references}

\begin{thebibliography}{42}
\providecommand{\natexlab}[1]{#1}
\providecommand{\url}[1]{\texttt{#1}}
\expandafter\ifx\csname urlstyle\endcsname\relax
  \providecommand{\doi}[1]{doi: #1}\else
  \providecommand{\doi}{doi: \begingroup \urlstyle{rm}\Url}\fi

\bibitem[Ahmadian et~al.(2019)Ahmadian, Epasto, Kumar, and Mahdian]{ahmadian2019clustering}
S.~Ahmadian, A.~Epasto, R.~Kumar, and M.~Mahdian.
\newblock Clustering without over-representation.
\newblock In \emph{Proceedings of the 25th ACM SIGKDD International Conference on Knowledge Discovery \& Data Mining}, pages 267--275, 2019.

\bibitem[Ahmadian et~al.(2020)Ahmadian, Epasto, Kumar, and Mahdian]{ahmadian2020fair}
S.~Ahmadian, A.~Epasto, R.~Kumar, and M.~Mahdian.
\newblock Fair correlation clustering.
\newblock In \emph{International Conference on Artificial Intelligence and Statistics (AISTATS)}, pages 4195--4205. PMLR, 2020.

\bibitem[Arun et~al.(2016)Arun, Ishan, and Sanmeet]{arun2016loan}
K.~Arun, G.~Ishan, and K.~Sanmeet.
\newblock Loan approval prediction based on machine learning approach.
\newblock \emph{IOSR Journal of Computer Engineering}, 18\penalty0 (3):\penalty0 18--21, 2016.

\bibitem[Awasthi et~al.(2020)Awasthi, Kleindessner, and Morgenstern]{awasthi2020equalized}
P.~Awasthi, M.~Kleindessner, and J.~Morgenstern.
\newblock Equalized odds postprocessing under imperfect group information.
\newblock In \emph{International Conference on Artificial Intelligence and Statistics (AISTATS)}, pages 1770--1780. PMLR, 2020.

\bibitem[Awasthi et~al.(2021)Awasthi, Beutel, Kleindessner, Morgenstern, and Wang]{awasthi2021evaluating}
P.~Awasthi, A.~Beutel, M.~Kleindessner, J.~Morgenstern, and X.~Wang.
\newblock Evaluating fairness of machine learning models under uncertain and incomplete information.
\newblock In \emph{Conference on Fairness, Accountability, and Transparency (FAccT)}, pages 206--214, 2021.

\bibitem[Awasthi et~al.(2022)Awasthi, Brubach, Chakrabarty, Dickerson, Esmaeili, Kleindessner, Knittel, Morgenstern, Samadi, Srinivasan, and Tsepenekas]{fc_tutorial}
P.~Awasthi, B.~Brubach, D.~Chakrabarty, J.~P. Dickerson, S.~A. Esmaeili, M.~Kleindessner, M.~Knittel, J.~Morgenstern, S.~Samadi, A.~Srinivasan, and L.~Tsepenekas.
\newblock Fairness in clustering.
\newblock In \emph{Conference on Artificial Intelligence (AAAI)}, 2022.

\bibitem[Backurs et~al.(2019)Backurs, Indyk, Onak, Schieber, Vakilian, and Wagner]{backurs2019scalable}
A.~Backurs, P.~Indyk, K.~Onak, B.~Schieber, A.~Vakilian, and T.~Wagner.
\newblock Scalable fair clustering.
\newblock In \emph{International Conference on Machine Learning (ICML)}, pages 405--413. PMLR, 2019.

\bibitem[Barocas et~al.(2023)Barocas, Hardt, and Narayanan]{barocas-hardt-narayanan}
S.~Barocas, M.~Hardt, and A.~Narayanan.
\newblock \emph{Fairness and Machine Learning: Limitations and Opportunities}.
\newblock MIT Press, 2023.

\bibitem[Ben-Tal et~al.(2009)Ben-Tal, El~Ghaoui, and Nemirovski]{ben2009robust}
A.~Ben-Tal, L.~El~Ghaoui, and A.~Nemirovski.
\newblock \emph{Robust Optimization}, volume~28.
\newblock Princeton University Press, 2009.

\bibitem[Bera et~al.(2019)Bera, Chakrabarty, Flores, and Negahbani]{bera2019fair}
S.~Bera, D.~Chakrabarty, N.~Flores, and M.~Negahbani.
\newblock Fair algorithms for clustering.
\newblock \emph{Conference on Neural Information Processing Systems (NeurIPS)}, 32, 2019.

\bibitem[Bercea et~al.(2018)Bercea, Gro{\ss}, Khuller, Kumar, R{\"o}sner, Schmidt, and Schmidt]{bercea2018cost}
I.~O. Bercea, M.~Gro{\ss}, S.~Khuller, A.~Kumar, C.~R{\"o}sner, D.~R. Schmidt, and M.~Schmidt.
\newblock On the cost of essentially fair clusterings.
\newblock \emph{arXiv preprint arXiv:1811.10319}, 2018.

\bibitem[Chhabra et~al.(2023)Chhabra, Li, Mohapatra, and Liu]{Chhabra23:Robust}
A.~Chhabra, P.~Li, P.~Mohapatra, and H.~Liu.
\newblock Robust fair clustering: A novel fairness attack and defense framework.
\newblock \emph{International Conference on Learning Representations (ICLR)}, 2023.

\bibitem[Chierichetti et~al.(2017)Chierichetti, Kumar, Lattanzi, and Vassilvitskii]{chierichetti2017fair}
F.~Chierichetti, R.~Kumar, S.~Lattanzi, and S.~Vassilvitskii.
\newblock Fair clustering through fairlets.
\newblock \emph{Conference on Neural Information Processing Systems (NeurIPS)}, 30, 2017.

\bibitem[Danks and London(2017)]{danks2017algorithmic}
D.~Danks and A.~J. London.
\newblock Algorithmic bias in autonomous systems.
\newblock In \emph{International Joint Conference on Artificial Intelligence (IJCAI)}, volume~17, pages 4691--4697, 2017.

\bibitem[Dickerson et~al.(2023)Dickerson, Esmaeili, Morgenstern, and Zhang]{dickerson2023doubly}
J.~Dickerson, S.~A. Esmaeili, J.~Morgenstern, and C.~J. Zhang.
\newblock Doubly constrained fair clustering.
\newblock \emph{Conference on Neural Information Processing Systems (NeurIPS)}, 2023.

\bibitem[Dua and Graff(2017)]{dua2017uci}
D.~Dua and C.~Graff.
\newblock Uci machine learning repository.
\newblock 2017.

\bibitem[Esmaeili et~al.(2020)Esmaeili, Brubach, Tsepenekas, and Dickerson]{esmaeili2020probabilistic}
S.~Esmaeili, B.~Brubach, L.~Tsepenekas, and J.~Dickerson.
\newblock Probabilistic fair clustering.
\newblock \emph{Conference on Neural Information Processing Systems (NeurIPS)}, 33:\penalty0 12743--12755, 2020.

\bibitem[Esmaeili et~al.(2021)Esmaeili, Brubach, Srinivasan, and Dickerson]{esmaeili2021fair}
S.~Esmaeili, B.~Brubach, A.~Srinivasan, and J.~Dickerson.
\newblock Fair clustering under a bounded cost.
\newblock \emph{Advances in Neural Information Processing Systems}, 34:\penalty0 14345--14357, 2021.

\bibitem[Feldman et~al.(2015)Feldman, Friedler, Moeller, Scheidegger, and Venkatasubramanian]{feldman2015certifying}
M.~Feldman, S.~A. Friedler, J.~Moeller, C.~Scheidegger, and S.~Venkatasubramanian.
\newblock Certifying and removing disparate impact.
\newblock In \emph{International Conference on Knowledge Discovery and Data Mining (KDD)}, pages 259--268, 2015.

\bibitem[Ghasemi et~al.(2021)Ghasemi, Anvari, Atapour, Stephen~Wormith, Stockdale, and Spiteri]{ghasemi2021application}
M.~Ghasemi, D.~Anvari, M.~Atapour, J.~Stephen~Wormith, K.~C. Stockdale, and R.~J. Spiteri.
\newblock The application of machine learning to a general risk--need assessment instrument in the prediction of criminal recidivism.
\newblock \emph{Criminal Justice and Behavior}, 48\penalty0 (4):\penalty0 518--538, 2021.

\bibitem[Hagberg et~al.(2013)Hagberg, Schult, Swart, Conway, S{\'e}guin-Charbonneau, Ellison, Edwards, and Torrents]{hagberg2013networkx}
A.~Hagberg, D.~Schult, P.~Swart, D.~Conway, L.~S{\'e}guin-Charbonneau, C.~Ellison, B.~Edwards, and J.~Torrents.
\newblock Networkx. high productivity software for complex networks.
\newblock \emph{Webov{\'a} str{\'a} nka https://networkx. lanl. gov/wiki}, 2013.

\bibitem[Hardt et~al.(2016)Hardt, Price, and Srebro]{hardt2016equality}
M.~Hardt, E.~Price, and N.~Srebro.
\newblock Equality of opportunity in supervised learning.
\newblock \emph{Conference on Neural Information Processing Systems (NeurIPS)}, 29, 2016.

\bibitem[Hashimoto et~al.(2018)Hashimoto, Srivastava, Namkoong, and Liang]{hashimoto2018fairness}
T.~Hashimoto, M.~Srivastava, H.~Namkoong, and P.~Liang.
\newblock Fairness without demographics in repeated loss minimization.
\newblock In \emph{International Conference on Machine Learning (ICML)}, pages 1929--1938. PMLR, 2018.

\bibitem[Kallus et~al.(2022)Kallus, Mao, and Zhou]{kallus2022assessing}
N.~Kallus, X.~Mao, and A.~Zhou.
\newblock Assessing algorithmic fairness with unobserved protected class using data combination.
\newblock \emph{Management Science}, 68\penalty0 (3):\penalty0 1959--1981, 2022.

\bibitem[Kleindessner et~al.(2019)Kleindessner, Samadi, Awasthi, and Morgenstern]{kleindessner2019guarantees}
M.~Kleindessner, S.~Samadi, P.~Awasthi, and J.~Morgenstern.
\newblock Guarantees for spectral clustering with fairness constraints.
\newblock In \emph{International Conference on Machine Learning}, pages 3458--3467. PMLR, 2019.

\bibitem[Knittel et~al.(2023{\natexlab{a}})Knittel, Springer, Dickerson, and Hajiaghayi]{Knittel23:Fair}
M.~Knittel, M.~Springer, J.~Dickerson, and M.~Hajiaghayi.
\newblock Fair polylog-approximate low-cost hierarchical clustering.
\newblock \emph{Conference on Neural Information Processing Systems (NeurIPS)}, 2023{\natexlab{a}}.

\bibitem[Knittel et~al.(2023{\natexlab{b}})Knittel, Springer, Dickerson, and Hajiaghayi]{knittel2023generalized}
M.~Knittel, M.~Springer, J.~P. Dickerson, and M.~Hajiaghayi.
\newblock Generalized reductions: making any hierarchical clustering fair and balanced with low cost.
\newblock In \emph{International Conference on Machine Learning (ICML)}, pages 17218--17242. PMLR, 2023{\natexlab{b}}.

\bibitem[Krumpal(2013)]{krumpal2013determinants}
I.~Krumpal.
\newblock Determinants of social desirability bias in sensitive surveys: a literature review.
\newblock \emph{Quality \& quantity}, 47\penalty0 (4):\penalty0 2025--2047, 2013.

\bibitem[Kumar et~al.(2022)Kumar, Ahmed, Bharany, Shuaib, Ahmad, Tag~Eldin, Rehman, and Shafiq]{kumar2022exploitation}
S.~Kumar, R.~Ahmed, S.~Bharany, M.~Shuaib, T.~Ahmad, E.~Tag~Eldin, A.~U. Rehman, and M.~Shafiq.
\newblock Exploitation of machine learning algorithms for detecting financial crimes based on customers’ behavior.
\newblock \emph{Sustainability}, 14\penalty0 (21):\penalty0 13875, 2022.

\bibitem[Lamy et~al.(2019)Lamy, Zhong, Menon, and Verma]{lamy2019noise}
A.~Lamy, Z.~Zhong, A.~K. Menon, and N.~Verma.
\newblock Noise-tolerant fair classification.
\newblock \emph{Conference on Neural Information Processing Systems (NeurIPS)}, 32, 2019.

\bibitem[Mahmoud et~al.(2019)Mahmoud, Shawabkeh, Salameh, and Al~Amro]{mahmoud2019performance}
A.~A. Mahmoud, T.~A. Shawabkeh, W.~A. Salameh, and I.~Al~Amro.
\newblock Performance predicting in hiring process and performance appraisals using machine learning.
\newblock In \emph{International Conference on Information and Communication Systems (ICICS)}, pages 110--115. IEEE, 2019.

\bibitem[Mehrotra and Celis(2021)]{mehrotra2021mitigating}
A.~Mehrotra and L.~E. Celis.
\newblock Mitigating bias in set selection with noisy protected attributes.
\newblock In \emph{Proceedings of the 2021 ACM conference on fairness, accountability, and transparency}, pages 237--248, 2021.

\bibitem[Mehrotra and Vishnoi(2022)]{Mehrotra22:Fair}
A.~Mehrotra and N.~K. Vishnoi.
\newblock Fair ranking with noisy protected attributes.
\newblock \emph{Conference on Neural Information Processing Systems (NeurIPS)}, 2022.

\bibitem[Nicholls et~al.(2021)Nicholls, Kuppa, and Le-Khac]{nicholls2021financial}
J.~Nicholls, A.~Kuppa, and N.-A. Le-Khac.
\newblock Financial cybercrime: A comprehensive survey of deep learning approaches to tackle the evolving financial crime landscape.
\newblock \emph{IEEE Access}, 9:\penalty0 163965--163986, 2021.

\bibitem[Nickel et~al.(2022)Nickel, Steinhardt, Schlenker, and Burkart]{nickel2022ibm}
S.~Nickel, C.~Steinhardt, H.~Schlenker, and W.~Burkart.
\newblock Ibm ilog cplex optimization studio—a primer.
\newblock In \emph{Decision Optimization with IBM ILOG CPLEX Optimization Studio: A Hands-On Introduction to Modeling with the Optimization Programming Language (OPL)}, pages 9--21. Springer, 2022.

\bibitem[Panch et~al.(2019)Panch, Mattie, and Atun]{panch2019artificial}
T.~Panch, H.~Mattie, and R.~Atun.
\newblock Artificial intelligence and algorithmic bias: implications for health systems.
\newblock \emph{Journal of Global Health}, 9\penalty0 (2), 2019.

\bibitem[Sheikh et~al.(2020)Sheikh, Goel, and Kumar]{sheikh2020approach}
M.~A. Sheikh, A.~K. Goel, and T.~Kumar.
\newblock An approach for prediction of loan approval using machine learning algorithm.
\newblock In \emph{2020 International Conference on Electronics and Sustainable Communication Systems (ICESC)}, pages 490--494. IEEE, 2020.

\bibitem[Travaini et~al.(2022)Travaini, Pacchioni, Bellumore, Bosia, and De~Micco]{travaini2022machine}
G.~V. Travaini, F.~Pacchioni, S.~Bellumore, M.~Bosia, and F.~De~Micco.
\newblock Machine learning and criminal justice: A systematic review of advanced methodology for recidivism risk prediction.
\newblock \emph{International Journal of Environmental Research and Public Health}, 19\penalty0 (17):\penalty0 10594, 2022.

\bibitem[Van~den Broek et~al.(2021)Van~den Broek, Sergeeva, and Huysman]{van2021machine}
E.~Van~den Broek, A.~Sergeeva, and M.~Huysman.
\newblock When the machine meets the expert: An ethnography of developing {AI} for hiring.
\newblock \emph{MIS Quarterly}, 45\penalty0 (3), 2021.

\bibitem[Wang et~al.(2023)Wang, Lu, Davidson, and Bai]{wang2023scalable}
J.~Wang, D.~Lu, I.~Davidson, and Z.~Bai.
\newblock Scalable spectral clustering with group fairness constraints.
\newblock In \emph{International Conference on Artificial Intelligence and Statistics (AISTATS)}, pages 6613--6629. PMLR, 2023.

\bibitem[Wang et~al.(2020)Wang, Guo, Narasimhan, Cotter, Gupta, and Jordan]{wang2020robust}
S.~Wang, W.~Guo, H.~Narasimhan, A.~Cotter, M.~Gupta, and M.~Jordan.
\newblock Robust optimization for fairness with noisy protected groups.
\newblock \emph{Advances in neural information processing systems}, 33:\penalty0 5190--5203, 2020.

\bibitem[Zeng et~al.(2023)Zeng, Li, Hu, Peng, Lv, and Peng]{Zeng_2023_CVPR}
P.~Zeng, Y.~Li, P.~Hu, D.~Peng, J.~Lv, and X.~Peng.
\newblock Deep fair clustering via maximizing and minimizing mutual information: Theory, algorithm and metric.
\newblock In \emph{Computer Vision and Pattern Recognition Conference (CVPR)}, pages 23986--23995, June 2023.

\end{thebibliography}



\appendix
\onecolumn

\section{Useful Fact}  
\begin{fact} \label{cauch_ineq}
    For any positive real numbers $a_1,a_2,\dots, a_n$ and $b_1,b_2,\dots,b_n$, the following holds
    \begin{equation}
        \min_{i  \in [n]} \frac{a_i}{b_i} \leq \frac{a_1+a_2 + \cdots + a_n}{b_1+b_2 + \cdots + b_n } \leq \max_{i  \in [n]} \frac{a_i}{b_i}
    \end{equation}
\end{fact}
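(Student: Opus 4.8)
The plan is to prove both inequalities by the same elementary argument, exploiting the positivity of the $b_i$ to clear denominators without flipping inequality signs. First I would fix notation: let $M := \max_{i \in [n]} \frac{a_i}{b_i}$ and $m := \min_{i \in [n]} \frac{a_i}{b_i}$. Since every $b_i > 0$, the sum $B := b_1 + \cdots + b_n$ is strictly positive, so the middle quantity $\frac{a_1 + \cdots + a_n}{B}$ is well-defined and dividing by $B$ later will preserve the direction of inequalities.

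For the upper bound, I would start from the defining property of the maximum: for each index $i$ we have $\frac{a_i}{b_i} \leq M$. Because $b_i > 0$, multiplying through by $b_i$ preserves the inequality and yields $a_i \leq M\, b_i$ for every $i$. Summing these $n$ inequalities gives
\begin{equation*}
a_1 + a_2 + \cdots + a_n \;\leq\; M\,(b_1 + b_2 + \cdots + b_n) \;=\; M\,B.
\end{equation*}
Dividing both sides by $B > 0$ then produces $\frac{a_1 + \cdots + a_n}{B} \leq M$, which is exactly the right-hand inequality in the statement.

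The lower bound is completely symmetric: from $\frac{a_i}{b_i} \geq m$ and $b_i > 0$ I would obtain $a_i \geq m\, b_i$ for each $i$, sum over $i$ to get $a_1 + \cdots + a_n \geq m\,B$, and divide by $B > 0$ to conclude $\frac{a_1 + \cdots + a_n}{B} \geq m$. Combining the two bounds gives the claimed chain of inequalities.

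There is no genuine obstacle here; the only point requiring care is to use the hypothesis that the $b_i$ are \emph{strictly positive} at exactly two places, namely (i) when multiplying the componentwise ratio bounds by $b_i$ to clear denominators, and (ii) when dividing the summed inequality by $B$. Both steps rely on multiplication/division by a positive quantity preserving the inequality direction, and both would fail if some $b_i$ were allowed to be zero or negative. I would therefore make this positivity invocation explicit in the write-up rather than leave it implicit.
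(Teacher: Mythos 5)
Your proof is correct and follows essentially the same route as the paper: the paper also bounds each $a_i$ by $\tau_{\max}\, b_i$, sums, and cancels the denominator, handling the lower bound symmetrically. Your write-up is merely more explicit about where strict positivity of the $b_i$ is used, which is a fine clarification but not a different argument.
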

\begin{proof}
Let $\tau_{\max} = \max_{i  \in [n]} \frac{a_i}{b_i}$, therefore we have 
\begin{align*}
    \frac{a_1+a_2 + \cdots + a_n}{b_1+b_2 + \cdots + b_n } \leq \frac{\tau_{\max} (b_1+b_2 + \cdots + b_n)}{b_1+b_2 + \cdots + b_n} = \tau_{\max} 
\end{align*}
The lower bound can be proved similarly. 
\end{proof}

\section{Omitted Proofs}
\label{app:prelims}

\uncertaintyset*

\begin{proof}
Let $\hat{\chi}$ be a feasible assignment in the uncertainty set and let $\hat{\cP}_h = \hat{\chi}^{-1}(h)$ be the collection of points assigned color $h$ by $\hat{\chi}$. For any color $h$, $\cP_h \setminus \hat{\cP_h}$ denotes the set of points that are assigned color $h$ by $\chi$ and a different color $g \neq h$ by $\hat{\chi}$. Clearly, by definition of $\mhm$ the first constraint should hold, i.e. 
\begin{align}
    |\cP_h \setminus \hat{\cP_h}| \leq \mhm
\end{align}
Furthermore, $\hat{\cP_h} \setminus \cP_h$ denotes the set of points that are assigned color $h$ by $\hat{\chi}$ but were assigned a different color $g \neq h$ by $\chi$. By definition of $\mhp$ the second constraint holds, i.e. 
\begin{align}
    |\hat{\cP_h} \setminus \cP_h| \leq \mhm
\end{align}

\end{proof}

\begin{restatable}{observation}{clusteri}  \label{obs:ci_lb}
    Suppose that $(S,\phi)$ is $\trf{}$ solution to a given input instance and $u_h<1$ and $l_h>0$. Then the clustering $\{C_1,C_2,\cdots,C_{k'}\}$ of points induced by $(S,\phi)$ must satisfy the following,
    \begin{itemize}
        \item For any $i\in S$, $h \in \cH$, 
        $|C_{i,h}(\chi)| > \mhout$. \label{cih_lowerbound}
        \item For any $i\in S$, $h \in \cH$, 
        $\sum_{g\neq h, g\in \cH} |C_{i,g}(\chi)| >  \mhin$. \label{cih_ubound}
    \end{itemize}
\end{restatable}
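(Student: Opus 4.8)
The plan is to invoke \cref{lemma:poly_constraints}, which characterizes a robust fair solution exactly by the polynomially-sized constraints \eqref{eq:ub} and \eqref{eq:lb}. Both bullets would then follow by elementary manipulation of these two constraints, with the strictness of each claimed inequality extracted from the hypotheses $l_h > 0$ and $u_h < 1$.

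For the first bullet, I would start from the lower-bound constraint \eqref{eq:lb}, rearranged as $|C_{i,h}(\chi)| - \mhout \geq l_h |C_i|$. Since $i \in S$ is a center, its cluster is non-empty, so $|C_i| \geq 1$, and since $l_h > 0$ by hypothesis, the right-hand side $l_h|C_i|$ is strictly positive. This immediately yields $|C_{i,h}(\chi)| > \mhout$.

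For the second bullet, I would start from the upper-bound constraint \eqref{eq:ub}, rearranged as $|C_{i,h}(\chi)| + \mhin \leq u_h |C_i|$, and substitute the partition identity $|C_i| = |C_{i,h}(\chi)| + \sum_{g \neq h, g \in \cH} |C_{i,g}(\chi)|$. Isolating $\mhin$ would give
\[
\mhin \leq u_h \sum_{g \neq h, g \in \cH} |C_{i,g}(\chi)| - (1-u_h)\,|C_{i,h}(\chi)|.
\]
The first bullet already guarantees $|C_{i,h}(\chi)| \geq 1$, and $u_h < 1$ makes $(1-u_h) > 0$; hence the subtracted slack term is strictly positive and $\mhin < u_h \sum_{g \neq h} |C_{i,g}(\chi)|$. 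Using $u_h < 1$ once more to bound $u_h \sum_{g \neq h}|C_{i,g}(\chi)| \leq \sum_{g \neq h}|C_{i,g}(\chi)|$ would then deliver the claim $\sum_{g \neq h, g \in \cH}|C_{i,g}(\chi)| > \mhin$.

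The main thing to watch — and where the hypotheses do the real work — is obtaining \emph{strict} inequalities rather than the weak ones that \eqref{eq:ub} and \eqref{eq:lb} supply directly. For the first bullet strictness comes from $l_h|C_i| > 0$; for the second it comes from the strictly positive slack $(1-u_h)|C_{i,h}(\chi)| > 0$, which itself relies on the first bullet providing $|C_{i,h}(\chi)| \geq 1$. Because of this dependence I would prove the two bullets in the stated order.
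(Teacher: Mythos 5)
Your algebra is fine, but the approach has a structural flaw: it is circular relative to how \cref{lemma:poly_constraints} is actually established. In the paper, the logical order is the reverse of yours: \cref{obs:ci_lb} is proved \emph{first}, directly from the robust fairness constraints \eqref{eq:robust_fair}, and is then used inside the proof of the forward direction of \cref{lemma:poly_constraints} (it is precisely what justifies the steps $\min\{\mhout, |C_{i,h}(\chi)|\} = \mhout$ and $\min\{\mhin, \sum_{g\neq h}|C_{i,g}|\} = \mhin$, i.e., that the worst-case perturbation in a cluster is capped by the noise budget rather than by the cluster's contents). This dependence is not incidental: without the observation, the worst-case gain of color $h$ in cluster $C_i$ is only $\min\{\mhin, \sum_{g \neq h}\min\{m_g^-, |C_{i,g}(\chi)|\}\}$, so one cannot pass from \eqref{eq:robust_fair} to \eqref{eq:ub} and \eqref{eq:lb} in the first place. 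Invoking the lemma to prove the observation therefore assumes what is to be shown.

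The paper's own proof avoids this by arguing from the uncertainty set directly. For the first bullet: since $l_h > 0$, constraint \eqref{eq:robust_fair} forces $\min_{\hat\chi \in \cU} |C_{i,h}(\hat\chi)| > 0$; but there is a coloring in $\cU$ that removes $\min\{\mhout, |C_{i,h}(\chi)|\}$ color-$h$ points from $C_i$ (feasible by the consistency inequality \eqref{eq:minus}), so $|C_{i,h}(\chi)| - \mhout > 0$. For the second bullet, the paper does not touch the upper bound at all: it simply sums the first bullet over $g \neq h$ to get $\sum_{g\neq h}|C_{i,g}(\chi)| > \sum_{g\neq h} m_g^-$ and then applies the consistency inequality \eqref{eq:plus}, $\sum_{g\neq h} m_g^- \geq \mhin$. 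Your derivation of the second bullet from \eqref{eq:ub} and the partition identity would be a legitimate alternative route \emph{if} the lemma had a proof independent of the observation, but as the paper stands it does not, so you should restructure your argument along these direct lines.
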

\begin{proof}
    We know that since the clustering is robust fair it must satisfy the constraints in \cref{eq:robust_fair} i.e., 
    \begin{align*}
        \forall  i \in S, \forall h \in \cH : \min \frac{|C_{i,h}(\hat{\chi})|}{|C_i|} \geq l_h \implies \min {|C_{i,h}(\hat{\chi})|} >0
    \end{align*}
However, we know that there can be as much as $m_h^-$ points that have incorrect group memberships from each color $h$. Therefore we have
\begin{align*}
       \forall  i \in S, \forall h \in \cH :\min {|C_{i,h}(\hat{\chi})|} >0 \implies  {|C_{i,h}({\chi})|} - m_h^- >0 
\end{align*}
Thus, we show the first part of the claim. 

For each $i \in S$ and color $h \in \cH$, using the lower bound on ${|C_{i,g}({\chi})|}$ obtained in the first part, we can derive a lower bound on $\sum_{g \in \cH, g\neq h} {|C_{i,g}({\chi})|}$ by summing over all the groups $g\in \cH$ and $g\neq h$ as follows,
\begin{align*}
 \sum_{g \in \cH, g\neq h} {|C_{i,g}({\chi})|}  > \sum_{g \in \cH, g\neq h} m_g^-  \geq \mhin 
\end{align*}
The last inequality is from the inequality \eqref{eq:plus} which says that the number of points gained by any group $h$ is at most the total number of points lost by the remaining groups $g \neq h$ i.e., $\sum_{g \in \cH, g\neq h} m_g^-  \geq \mhin$. Therefore, we get the desired bound.
 \end{proof}

\polyconstraints*
\begin{proof}
To show that the constraints \eqref{eq:robust_fair} are equivalent to \eqref{eq:ub} and \eqref{eq:lb}, we have to show that for any feasible clustering $\{C_1,C_2,\dots,C_{k'}\}$ with $k' \leq k$ satisfying \eqref{eq:robust_fair} must also satisfy \eqref{eq:ub} and \eqref{eq:lb} and vice versa. 

First we show the forward direction, i.e., if $\{C_1,C_2,\dots,C_{k'}\}$ satisfies \eqref{eq:robust_fair} then it also satisfies \eqref{eq:ub} and \eqref{eq:lb}. Since the fairness constraints in \eqref{eq:robust_fair} must hold for all points in the uncertainty set $\cU$, they are also valid for the worst-case $\hat{\chi}$ in $\cU$ i.e., 
\begin{align*}
    \forall \hat{\chi} \in \cU, \quad \frac{|\cih(\hat{\chi})|}{|C_i|} \leq u_h 
&\implies \max_{\hat{\chi} \in \cU } \frac{|\cih(\hat{\chi})|}{|C_i|} \leq u_h.
\end{align*}
However, in each cluster $C_i$, we know that the maximum number of points from color $h$ that are incorrectly labeled as $h$ but actually belong to a different group $g$ given by $\min\{ \mhin, \sum_{{g \in \cH , g \neq h}} |C_{i,g}(\hat{\chi})|\}.$ Therefore, for each $ 1\leq i \leq k'$ and $ h \in \cH $, we have
\begin{align*}
    \max_{\hat{\chi} \in \cU } \frac{|\cih(\hat{\chi})|}{|C_i|} =   \frac{|\cih(\hat{\chi})| + \min \{\mhin,\sum_{{g \in \cH , g \neq h}} |C_{i,g}(\hat{\chi})| \} }{|C_i|} = 
   \frac{|\cih(\hat{\chi})| + \mhin}{|C_i|} \leq u_h.
\end{align*}
The last equality is from the fact in \cref{obs:ci_lb} that $\sum_{{g \in \cH , g \neq h}} |C_{i,g}(\hat{\chi})| > \mhin $ 

Next we prove that if a clustering satisfies constraints in \eqref{eq:robust_fair} then it also satisfies \eqref{eq:lb}. Since the clustering is feasible we know that, 
\begin{align*}
    \forall \hat{\chi} \in \cU, \quad \frac{|\cih(\hat{\chi})|}{|C_i|} \geq l_h 
&\implies \min_{\hat{\chi} \in \cU } \frac{|\cih(\hat{\chi})|}{|C_i|} \geq l_h.
\end{align*}

We know that in each $C_i$, the maximum number of points mistakenly assigned a color $h$ but actually belonging to group $ g \neq h$ is $ \min\{\mhout, |C_{i,h}(\chi)|\}$. However from the first part of \cref{obs:ci_lb} it is clear that, for any cluster $C_i$, we have $|C_{i,h}(\chi)| > \mhout$ which implies that $ \min\{\mhout, |C_{i,h}(\chi)|\} = \mhout.$ Therefore, we have
\begin{align*}
    \min_{\hat{\chi} \in \cU } \frac{|\cih(\hat{\chi})|}{|C_i|} = \frac{|C_{i,h}({\chi})| - \min \{\mhout, |C_{i,h}(\chi)|\}}{|C_i|}  = \frac{|\cih({\chi})| - \mhout }{|C_i|} \geq l_h. 
\end{align*}

This concludes the forward direction of our proof. Next, we show that if \eqref{eq:lb} and \eqref{eq:ub} holds, then \cref{eq:robust_fair} also holds. 
We know that for any $1\leq i\leq k'$, $C_i$ satisfies the upper bound constraints in \eqref{eq:ub}. This implies that any feasible assignment $\hat{\chi}$ contains at most $|\cih({\chi})| + \mhin$ points of color $h$ in $C_i$. From \eqref{eq:pos} in \cref{prop:uncertaintyset} we know that, for any assignment in the uncertainty set at most $\mhin$ additional points are assigned to color $h$. Therefore, we have
\begin{align*} 
    \forall h \in \cH : \frac{|\cih({\chi})| + \mhin }{|C_i|} \leq u_h \implies \forall \hat{\chi} \in \cU: \frac{|\cih({\hat{\chi}})|  }{|C_i|}  \leq  \frac{|\cih({\chi})| + \mhin }{|C_i|} \leq u_h.
\end{align*}
For the case of lower bound constraints we know that $C_i$ satisfies the constraints in \eqref{eq:lb}. Therefore, we can say that any feasible  assignment cannot have less than $|\cih(\chi)| -\mhout $ points of color $h$ in $C_i$. Therefore, for any $1\leq i \leq k'$, 
\begin{align*}
    \forall h \in \cH : \frac{|\cih({\chi})| - \mhout }{|C_i|} \geq l_h \implies  \forall \hat{\chi} \in \cU, \quad \frac{|\cih(\hat{\chi})|}{|C_i|} \geq   \frac{|\cih({\chi})| - \mhout }{|C_i|}  \geq     l_h
\end{align*}
as desired. Therefore, the constraints in \eqref{eq:robust_fair} are equivalent to \eqref{eq:ub} and \eqref{eq:lb}.
\end{proof}


\ublbobs*
\begin{proof}
Suppose we have a clustering $\{C_1,\dots,C_{k'}\}$ ) with $k' \leq k$. As before $\cih(\chi)$ denotes the set of points belonging to group $\cH$ in the original coloring.  
Applying the Fact \ref{cauch_ineq}, we get the following,
\begin{equation}\label{eq:color_proportion_bounds}
    \min\limits_{i \in [k']} \frac{|\cih(\chi)|}{|C_i|} \leq\frac{ \sum_{i \in [k']}\cih(\chi)}{\sum_{i \in [k']} C_i} = \frac{|\cP_h|}{|\cP|} \leq \max\limits_{i \in [k']} \frac{|\cih(\chi)|}{|C_i|}
\end{equation}
Suppose that $u_h < \frac{\nh+ \mhin }{n}$ for some color, then by the definition of the robust optimization problem, it is possible to have $\nh+ \mhin $ many points of color $h$ in the dataset. Accordingly, we it must be that for some color assignment $\frac{|\cP_h|}{|\cP|}=\frac{\nh+ \mhin }{n}$ but \eqref{eq:color_proportion_bounds} implies that there exists some value $i \in [k']$ such that $\frac{|C_{i,h}(\chi)|}{|C_i|} \ge  \frac{|\cP_h|}{|\cP|}=\frac{\nh+ \mhin}{n}$. Therefore, $|C_{i,h}(\chi)|> u_h |C_i|$ and therefore the solution is infeasible. The same argument can be made for the lower bound as well. 

We will now show that if $\forall h\in \cH: u_h \ge \frac{\nh+ \mhin }{n}, l_h \leq \frac{\nh-\mhout}{n}$, then the problem must be feasible. To show feasibility we simply show one feasible solution. Specifically, a solution which is always feasible is a one cluster solution that includes all of the points, i.e. $\{C_1\}=\{\cP\}$. Clearly, we have for any color $h: \frac{\nh-\mhout}{n} \leq \frac{|\cP_h|}{|\cP|} \leq \frac{\nh+ \mhin}{n}$. Since the $u_h \ge \frac{\nh+\mhin }{n}, l_h \leq \frac{\nh- \mhout}{n}$, then the solution is feasible. 
\end{proof}

\silesskone*
\begin{proof}
    We first show that for any $R\geq R^*$ the number of centers in $\hat{S}$ returned by \cref{alg:filtercenters} is at most $k$.
    Each center $i^* \in {S^*}$ has at most one $i' \in \hat{S}$ from its optimal cluster. This is because any two centers in $\hat{S}$ are separated by a distance strictly greater than $2R$ and therefore no two centers in $\hat{S}$ are selected from the same optimal cluster. Therefore, we have $|\hat{S}|\leq |S^*| \leq k$. 
    
    To prove the second part of the lemma, it suffices to show that for any $R \geq R^*$, (i) there exists an assignment $\phi': \cP \rightarrow \hat{S}$ that assigns points in $\cP$ to the centers in $\hat{S}$ and (ii) the cluster $\phi'^{-1}(i)$ corresponding to each $i \in \hat{S}$ is \trf{}. In \cref{lemma:existence_lemma}, we show using a non-constructive proof that for any $R \geq R^*$ there exists a feasible solution $(\hat{S},\phi')$ to our problem with a cost of at most $3R$ where each of the centers $i \in \hat{S}$ has a cluster that is \trf{}. 


    
    If such a solution $(\hat{S},\phi')$ exists then it immediately corresponds to feasible solution to the LP, this can be shown as follows: for each point $j \in \cP$, set $x_{i,j}=1$ if $\phi’(j)=i$ and $x_{i,j}=0$ otherwise. Clearly, this $x$ satisfies the constraints in \eqref{lp:rfa}. Moreover, each cluster $C_i = \phi'^{-1}(i)$  satisfies the constraints in \eqref{eq:lb} and \eqref{eq:ub} since it is robust fair, i.e.,
\begin{equation}
    \forall i \in \hat{S}, h \in \cH:  |C_i \cap \cP_h| - \mhout \geq l_h |C_i|, \quad \text{and} \quad |C_i \cap \cP_h| + \mhin \leq u_h |C_i|. \label{eq:feasiblesol}
\end{equation}
Furthermore, since $|C_i| = \sum_{j \in \cP}x_{i,j}$ and $|C_i \cap  \cP_h| = \sum_{j \in \cP_h}x_{i,j}$, the assignment $x$ satisfies the constraints in \eqref{cons:ub_trf} and \eqref{cons:lb_trf} by substituting $|C_i|=\sum_{j \in \cP}x_{i,j}$ and $|C_i \cap \cP_h|=\sum_{j \in \cP_h}x_{i,j}$ in the \eqref{eq:feasiblesol} we have
\begin{align*}
        \forall i \in \hat{S}, h \in \cH:  \sum_{j \in \cP_h}x_{i,j} - \mhout \geq l_h \sum_{j\in \cP}x_{i,j} \quad \text{and} \quad  \sum_{j \in \cP_h}x_{i,j}  + \mhin \leq u_h \sum_{j\in \cP}x_{i,j}. 
\end{align*}
Further note that by \cref{lemma:existence_lemma} that each point is assigned to a center in $\sgood$ that is at most at a distance of $3R$ therefore \eqref{eq:endrfa} is satisfied as well. Therefore, we conclude that for any $R \geq R^*$, there exists a non-empty feasible solution to LP~$(\hat{S},R)$.
\end{proof}

\begin{restatable}{lemma}{existencelemma} \label{lemma:existence_lemma}
For any $R\! \geq R^*$ and set of centers $\hat{S}$ returned by the \textsc{GetCenters} subroutine, (i) there exists an assignment $ \phi': \cP \!\! \rightarrow \!\! \hat{S}$ at a clustering cost of at most $3R$, i.e., $\text{cost}(\hat{S},\phi') \leq 3R$ and (ii) the assignment leads each center $i \in \hat{S}$ to have a cluster that is \trf{}.

\end{restatable}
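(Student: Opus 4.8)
The plan is to build $\phi'$ by relocating each cluster of a fixed optimal robust fair solution $(S^*,\phi^*)$ of cost $R^*$ onto a nearby center of $\hat S$, then to argue that no point moves more than $3R$ and that each resulting cluster, being a union of optimal clusters, stays robust fair. First I would record the two defining properties of \textsc{GetCenters}: any two centers of $\hat S$ are more than $2R$ apart, and every point of $\cP$ lies within $2R$ of some center of $\hat S$ (since the loop terminates only once every point has been marked, i.e.\ removed inside some $\text{Ball}(\cdot,2R)$).

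Next I would set up the correspondence between $S^*$ and $\hat S$. Because each center $c\in\hat S$ is itself a point of $\cP$, it is assigned by $\phi^*$ to some optimal center $\iota(c):=\phi^*(c)$ with $d(c,\iota(c))\le R^*$. The map $\iota:\hat S\to S^*$ is injective: if two centers $c_1,c_2\in\hat S$ shared an optimal center, both would lie within $R^*$ of it and hence within $2R^*\le 2R$ of each other, contradicting the $>2R$ separation (this is the separation argument already used for \cref{claim:silessk}). I would then define a relocation map $g:S^*\to\hat S$ by setting $g(\iota(c))=c$ for every $c$ in the image of $\iota$ (well defined by injectivity), and $g(i^*)=$ an arbitrary center of $\hat S$ within distance $2R$ of $i^*$ for the remaining optimal centers. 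Setting $\phi'(j):=g(\phi^*(j))$ completes the construction. The point of routing through $\iota$ is that $g$ then hits \emph{every} center of $\hat S$, so no opened center receives an empty cluster — which matters because the implicit lower bound on cluster size forced by \eqref{cons:ub_trf} and \eqref{cons:lb_trf} makes empty opened clusters infeasible.

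For the cost bound (i), any point $j$ with $\phi^*(j)=i^*$ satisfies $d(j,\phi'(j))\le d(j,i^*)+d(i^*,g(i^*))$, where the first term is at most $R^*$ and the second is at most $2R$ (it is $\le R^*$ in the image case and $\le 2R$ otherwise), giving $d(j,\phi'(j))\le R^*+2R\le 3R$. For robust fairness (ii), the cluster assigned to $c\in\hat S$ is $\bigcup_{i^*:\,g(i^*)=c} C^*_{i^*}$, a union of $t\ge 1$ clusters each satisfying the robust fairness constraints \eqref{eq:ub} and \eqref{eq:lb} of \cref{lemma:poly_constraints}. The remaining step is to check that these constraints survive unions: writing each piece as $|C_{s,h}|\le u_h|C_s|-\mhin$ and $|C_{s,h}|\ge l_h|C_s|+\mhout$ and summing over the pieces yields $\sum_s|C_{s,h}|+\mhin\le u_h\sum_s|C_s|-(t-1)\mhin\le u_h\sum_s|C_s|$ and $\sum_s|C_{s,h}|-\mhout\ge l_h\sum_s|C_s|+(t-1)\mhout\ge l_h\sum_s|C_s|$, so the union again satisfies \eqref{eq:ub} and \eqref{eq:lb}, and it is nonempty since it contains $C^*_{\iota(c)}$.

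The main obstacle — and the reason \textsc{GetCenters} rather than a vanilla algorithm is needed — is guaranteeing surjectivity onto $\hat S$ so that every opened center gets a nonempty, robust fair cluster; the injection $\iota$ supplies exactly the left-inverse structure that makes this possible. The accompanying technical insight is that the additive slack $\mhin,\mhout$ baked into each robust fair cluster accumulates favorably when clusters merge, which is precisely what permits several optimal clusters to collapse onto a single center of $\hat S$ without breaking fairness.
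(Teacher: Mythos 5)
Your proof is correct and takes essentially the same route as the paper's: relocate each optimal cluster onto the unique center of $\hat{S}$ it contains (the injectivity of your map $\iota$ is exactly the paper's observation that no two centers of $\hat{S}$ lie in the same optimal cluster), send orphan optimal clusters to any center within $2R$, bound the cost by the triangle inequality ($R^*+2R\leq 3R$), and show that robust fairness survives taking unions of optimal clusters. The only cosmetic difference is in that last step: the paper closes it with the mediant inequality (Fact~\ref{cauch_ineq}), whereas you sum the linearized constraints and absorb the $(t-1)\mhin$ (resp.\ $(t-1)\mhout$) slack directly; the two computations are equivalent.
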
    
\begin{proof}
Suppose that we have an optimal solution $(S^* ,\phi^*)$ with cost $R^*.$ Each point $j$ is assigned to some center $i^* \in S^*$ by the optimal assignment $\phis$. Let $C_{i^*}$ denote the set of points assigned to $i^*$, i.e., $C_{i^*}={\phi^*}^{-1}(i^*)$. 
We show the existence of an assignment $\phi' : \cP \rightarrow \hat{S}$  from the set of points $\cP$ to the set of centers $\hat{S}$ returned by \cref{alg:filtercenters} such that
 (i) the assignment $\phi'$ has $\text{cost}(\sgood,\phi') \leq 3 R$ and  
 (ii) each cluster corresponding to a center $i \in \hat{S}$ is \trf{}. 
 Note that the proof is non-constructive since it assumes knowledge of the optimal solution. 

 We construct the assignment $\phi'$ as follows: $\phi'$ assigns all the points in each cluster $C_{i^*}$ to the center $i' \in \hat{S}$ if $i'$ belongs to the cluster $C_{i^*}$, i.e., if $\phi^*(i') = i^*.$  It is possible that there are clusters $C_{i^*}$ with no point $i' \in \hat{S}$. Therefore, we assign such clusters to a center $i'$ in $\hat{S}$ where $d(i^*,i') \leq 2R$, i.e., $i'$ is at a distance of at most $2R$ from the cluster's center. Since every point has at least one center in $\hat{S}$ at a distance of at most $2R.$ This concludes the description of our assignment $\phi'.$ 
 
 Notice that each $i' \in \hat{S}$ gets assigned all the points associated with at least one center $i^* \in S^*$. This is because any two centers in $\hat{S}$ are separated by a distance strictly greater than $2R$ and therefore no two centers in $\hat{S}$ are selected from the same optimal cluster. Further, each center $i^*\in S^*$ gets assigned to some $i' \in \hat{S}$ at a distance of at most $2R.$ We now prove that this new assignment has a cost of at most $3R$, i.e., $\text{cost}(\hat{S},\phi') \leq 3R$. This holds because for any point $j \in \cP$ we have: 
\begin{align} 
    d(j, \phi'(j)) 
    &\leq d(j, \phis(j)) + d(\phis(j), \phi'(j)) \label{eq:1} \\  
    &\leq R^* + d(\phis(j),\phi'(j)) \label{eq:2} \\ 
    &\leq R^* + 2R \label{eq:3} \\
    &\leq 3R.
\end{align}
Inequalities \eqref{eq:1} follows from triangle inequality. Inequality \eqref{eq:2} follows from the fact that $\phi^*$ is an optimal \trf{} assignment. Finally, inequality \eqref{eq:3} is from the fact that $d(\phi^*(j), \phi'(j)) \leq 2R$ since each center $i^*\in S^*$ is assigned to a center $i' \in \hat{S}$ at a distance of at most $2R.$  It remains to show that for each $i'\in \hat{S}$ the corresponding cluster $\phi'^{-1}(i')$ is \emph{robust fair}, i.e., satisfying the constraints \eqref{eq:lb} and \eqref{eq:ub}. 
The following claim concludes the proof of this lemma. 

\begin{claim}\label{claim:robustfair}
The clustering induced by $(\hat{S},\phi')$ leads each center $i' \in \hat{S}$ to be robust fair.    
\end{claim}
\begin{proof}
Recall that  $C_{i^*}$ denotes the set of points that are assigned to center $i^* \in S^*$ in the optimal clustering $(S^*,\phi^*)$. Since $(S^*,\phi^*)$ is an optimal clustering to our problem, it must satisfy the constraints in \eqref{eq:ub} and \eqref{eq:lb}. Therefore, we have the following:
    \begin{align*}
      \forall i^* \in S^*, \forall h \in \cH:
     \frac{|C_{{i^*},h}(\chi)| + \mhin}{|C_{i^*}|} \leq  u_h, \text{ and }     \frac{|C_{{i^*},h}(\chi)|- \mhout}{|C_{i^*}|} \geq l_h.  
    \end{align*}
    For each $i' \in \hat{S}$, let $N(i')$ denote the set of centers $i^*\in S^*$ that are assigned to $i'$ by $\phi'$. For each $i' \in \hat{S}$ we can upper bound the proportion of any color as follows:
    \begin{align*}
          \frac{|C_{i',h}|+\mhin}{|C_{i'}|}  &= \frac{\big(\sum_{ i^* \in N(i') } |C_{i^*,h}|   
          \big) +\mhin }{\sum_{ i^* \in N(i') } |C_{i^*}|} \\  &\leq \frac{\sum_{ i^* \in N(i') } \big( |C_{{i^*},h}| +  \mhin \big) }{\sum_{ i^* \in N(i') } |C_{i^*}|} \\ &\leq \max_{ i^* \in N(i') } \frac{|C_{{i^*},h}| + \mhin}{|C_{i^*}|}  \leq u_h
    \end{align*}
    Similarly, we can also lower bound the proportions:
    \begin{align*}
         \frac{|C_{i',h}|-\mhout}{|C_{i'}|}  &= \frac{\big(\sum_{ i^* \in N(i') } |C_{i^*,h}|\big) -\mhout  }{\sum_{ i^* \in N(i') } |C_{i^*}|}  \\ &\geq  \frac{\sum_{ i^* \in N(i') } \big( |C_{i^*,h}| -\mhout  \big) }{\sum_{ i^* \in N(i') } |C_{i^*}|} \\
         &\geq \min_{i^* \in N(i')} \frac{ |C_{i^*,h}|  - \mhout   }{ |C_{i^*}|} \geq l_h
    \end{align*}
Note that the above inequalities follow from the \cref{cauch_ineq} since for any  center $i^*\in N(i')$, $\frac{ |C_{{i^*},h}|  - \mhout   }{ |C_{i^*}|} \geq  l_h $ and $\frac{ |C_{{i^*},h}|  + \mhin   }{ |C_{i^*}|} \leq  u_h.$

Furthermore, note that from the definition of $\phi'$ we know that $|N(i')|\geq 1$ for any $i'$ in $\hat{S}$, i.e., there exists at least one cluster $C_{i^*}$ in the optimal clustering that is assigned to $i'$. 
This concludes that each center $i' \in S$ is indeed \emph{robust fair}, i.e., satisfies the constraints \eqref{eq:ub} and \eqref{eq:lb}. 
\end{proof}
\end{proof}

\begin{restatable}{lemma}{fairnessviolation}\label{lemma:fairness_violation}
     $(\hat{S},\hat{\phi})$ returned by  \cref{alg:lbfaircluster} is a $\lambda$-violating solution where $\lambda $ is at most $\frac{2}{\outm}.$
\end{restatable}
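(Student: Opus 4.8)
The plan is to push the additive $\pm 1$ rounding errors guaranteed by \cref{lemma:obj_rounded} through the two fairness ratios, controlling them by a lower bound on the cluster sizes. By construction \rfcalg{} produces a fractional assignment $\mathrm{x}^{\text{LP}}$ that is feasible for $\lp$, so for every $i\in\hat{S}$ and $h\in\cH$ the fractional sizes $|C_{i,h}^{\text{LP}}|=\sum_{j\in\cP_h}\mathrm{x}^{\text{LP}}_{i,j}$ and $|C_i^{\text{LP}}|=\sum_{j\in\cP}\mathrm{x}^{\text{LP}}_{i,j}$ obey the robust fairness constraints, namely $|C_{i,h}^{\text{LP}}|+\mhin\le u_h|C_i^{\text{LP}}|$ and $|C_{i,h}^{\text{LP}}|-\mhout\ge l_h|C_i^{\text{LP}}|$ (these are exactly \eqref{cons:ub_trf} and \eqref{cons:lb_trf}). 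By \cref{lemma:poly_constraints} it then suffices to show that the rounded integral solution satisfies the relaxed constraints \eqref{eq:ub-lamb} and \eqref{eq:lb-lamb} with $\lambda=\frac{2}{\outm}$.

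First I would establish the cluster-size lower bound that drives the whole rate. Since $l_h>0$ and $|C_i^{\text{LP}}|>0$, the lower constraint forces $|C_{i,h}^{\text{LP}}|>\mhout$ for every color (an LP-level analogue of \cref{obs:ci_lb}); summing over $h\in\cH$ and using $|C_i^{\text{LP}}|=\sum_{h}|C_{i,h}^{\text{LP}}|$ yields $|C_i^{\text{LP}}|>\outm$. As $\outm$ is an integer, \cref{lemma:obj_rounded} then gives $|C_i^{\text{Integ}}|\ge\floor{|C_i^{\text{LP}}|}\ge\outm$.

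Next I would bound each violation separately. Writing $a=|C_{i,h}^{\text{LP}}|$ and $b=|C_i^{\text{LP}}|$, the rounding bounds $|C_{i,h}^{\text{Integ}}|\le\ceil{a}\le a+1$ and $|C_i^{\text{Integ}}|\ge\floor{b}$ together with $a+\mhin\le u_h b$ would give
\begin{align*}
\frac{|C_{i,h}^{\text{Integ}}|+\mhin}{|C_i^{\text{Integ}}|}\ \le\ \frac{u_h b+1}{\floor{b}}\ =\ u_h+\frac{u_h\{b\}+1}{\floor{b}}\ <\ u_h+\frac{2}{\outm},
\end{align*}
where $\{b\}=b-\floor{b}<1$, $u_h<1$, and $\floor{b}\ge\outm$. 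The lower-bound direction is symmetric: from $|C_{i,h}^{\text{Integ}}|\ge\floor{a}>a-1$, $|C_i^{\text{Integ}}|\le\ceil{b}$, and $a-\mhout\ge l_h b$ one would obtain
\begin{align*}
\frac{|C_{i,h}^{\text{Integ}}|-\mhout}{|C_i^{\text{Integ}}|}\ \ge\ \frac{l_h b-1}{\ceil{b}}\ =\ l_h-\frac{l_h\delta+1}{\ceil{b}}\ >\ l_h-\frac{2}{\outm},
\end{align*}
with $\delta=\ceil{b}-b<1$ and $\ceil{b}\ge b>\outm$. These two inequalities are precisely \eqref{eq:ub-lamb} and \eqref{eq:lb-lamb} with $\lambda=\frac{2}{\outm}$, so $(\hat{S},\hat{\phi})$ is $\frac{2}{\outm}$-violating.

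The hard part will be the cluster-size lower bound rather than the arithmetic: the entire $\frac{2}{\outm}$ rate hinges on $|C_i^{\text{Integ}}|\ge\outm$, which requires that the fractional solution genuinely inherits the strict per-color bound $|C_{i,h}^{\text{LP}}|>\mhout$. Because $|C_i^{\text{LP}}|$ is fractional, I must use the integrality of $\outm$ to pass from $|C_i^{\text{LP}}|>\outm$ to $\floor{|C_i^{\text{LP}}|}\ge\outm$; and in propagating the ratios I must keep the fractional-part corrections $\{b\},\delta$ strictly below $1$ and exploit $u_h,l_h<1$ so that every numerator correction stays below $2$.
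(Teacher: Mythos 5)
Your proposal is correct and follows essentially the same route as the paper's proof: LP feasibility of the robust constraints \eqref{cons:ub_trf}--\eqref{cons:lb_trf}, the $\pm 1$ rounding bounds of \cref{lemma:obj_rounded}, the cluster-size lower bound $|C_i^{\text{LP}}| \geq \outm$, and finally $l_h \leq u_h < 1$ to obtain $\lambda < \frac{2}{\outm}$. The only (cosmetic) difference is that you derive the cluster-size bound directly from the LP constraints by summing $|C_{i,h}^{\text{LP}}| > \mhout$ over colors, whereas the paper invokes \cref{obs:ci_lb}, and you track the error via fractional parts $\{b\}, \delta$ while the paper isolates the LP-constraint expressions \eqref{eq:first}--\eqref{eq:second} plus an additive $\frac{l_h+1}{\floor{|C_i^{\text{LP}}|}}$ (resp.\ $\frac{u_h+1}{\floor{|C_i^{\text{LP}}|}}$) term.
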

\begin{proof}
According to \cref{eq:robust_fair_viol} a $\lambda$-violating solution is only required to satisfy the following reduced constraints.
\begin{equation*}
 \forall i\in S, h\in \cH :  \frac{|C_{i,h}({\chi})| + \mhin }{ |C_i|} \leq  u_h +\lambda \text{ and } \frac{|C_{i,h}({\chi})|- \mhout }{ |C_i|}  \geq  l_h  -\lambda 
\end{equation*}
Therefore, for any given clustering $\{C_i\}_{i\in S}$, we have the fairness violation $\lambda$ as follows,
\begin{align*}
    \lambda \leq \max_{h \in \cH, i\in S} \left\{ \frac{l_h|{C}_i| - |{C}_{i,h}(\chi)| + \mhout }{|{C}_i|},  \frac{ |{C}_{i,h}(\chi)| + \mhin - u_h|{C_{i}}| }{|{C_i}|}  \right\}
\end{align*}
Before we delve into the proof we note that any LP solution $\mathrm{x}^{\text{LP}}$ that satisfies  constraints \cref{lp:rfa}-\cref{eq:endrfa}, the following holds:
\begin{align}
    l_h |C_i^{\text{LP}}| -  |C_{i,h}^{\text{LP}}| + \mhout \leq 0  \label{eq:first}\\ 
    |C_{i,h}^{\text{LP}}| -u_h |C_i^{\text{LP}}| + \mhin \leq 0 \label{eq:second}
\end{align}
Where Inequalities \eqref{eq:first} and \eqref{eq:second} follow from constraints \eqref{cons:lb_trf} and \eqref{cons:ub_trf}, respecitvely, by simple algebreic manipulation.

We note further by \cref{cih_lowerbound} that $|C_i^{\text{LP}}|$ satisfies:
\begin{align}
    |C_i^{\text{LP}}| \ge \outm \label{eq:LP_lower bound}
\end{align}

Now, let ${C}_i^{\text{Integ}}$ denote the cluster corresponding to the set of points that are assigned to center $i \in S$. The fairness violation from the lower bound can be bounded as follows 
    \begin{align*}
   \frac{l_h|{C}_i^{\text{Integ}}| - |{C}_{i,h}^{\text{Integ}}| + \mhout }{|{C}_i^{\text{Integ}}|}   & \leq  \frac{l_h (|C_i^{\text{LP}} |+ 1) - (|C_{i,h}^{\text{LP}}|-1) + \mhout }{\floor{|C_i^{\text{LP}}|}}  \quad \quad \text{(by the bounds in in \cref{lemma:obj_rounded})}\\ 
   &=  \frac{l_h |C_i^{\text{LP}}| - |C_{i,h}^{\text{LP}}|+\mhout }{\floor{|C_i^{\text{LP}}|}} + \frac{ l_h+1 }{\floor{|C_i^{\text{LP}}|}} \\ & 
   \leq  0 +   \frac{ l_h+1 }{\floor{|C_i^{\text{LP}}|}} \quad \quad \text{(by Inequality \eqref{eq:first})} \\
            & \leq   \frac{ l_h+1 }{\outm} \quad \quad \text{(by Inequality\eqref{eq:LP_lower bound} since $\outm$ is an integer)}
        \end{align*}
        
    
Similarly, we can bound the violation from the upper bound as follows 
\begin{align*}
    \frac{ |{C}_{i,h}^{\text{Integ}}| + \mhin - u_h|{C_{i}}^{\text{Integ}}| }{|{C_i}^{\text{Integ}}|} 
             &\leq  \frac{  (|C_{i,h}^{\text{LP}}|+1) + \mhin -u_h (|C_i^{\text{LP}}| - 1) }{\floor{|C_i^{\text{LP}}|}}  \quad \quad \text{(by the bounds in in \cref{lemma:obj_rounded})}\\
            &=   \frac{  |C_{i,h}^{\text{LP}}| -u_h |C_i^{\text{LP}}| + \mhin  }{\floor{|C_i^{\text{LP}}|}}  + \frac{1+u_h}{\floor{|C_i^{\text{LP}}|}}  \\ 
            & \leq 0 + \frac{ u_h  +1 }{\floor{|C_i^{\text{LP}}|}} \quad \quad \text{(by Inequality \eqref{eq:second})} \\
            & \leq \frac{ u_h  +1 }{\outm }  \quad \quad \text{(by Inequality\eqref{eq:LP_lower bound} since $\outm$ is an integer)}
\end{align*}
Finally, we have the following bound on $\lambda$,
\begin{align*}  
\lambda \leq  {\max_{h \in \cH} \left\{ \frac{ l_h  +1 }{\outm} ,\frac{1+u_h }{\outm} \right\} }    <  \frac{2}{\outm} 
\end{align*}
The last inequality is due to the fact that $l_h \leq u_h<1$. 
\end{proof}

\finaltheorem*
\begin{proof}
For any given instance of \rfc{}, any non-empty solution $(\hat{S},\hat{\phi})$ returned by \textsc{RobustAlg} (\cref{alg:lbfaircluster}) has a cost of at most $3R^*$ guaranteed by the fractional assignment returned by LP $(\hat{S},\hat{\phi})$ as no point is fractionally assigned to any center at a distance more than $3R^*$. Further, part (1) of \cref{lemma:obj_rounded} guarantees that the radius would not increase, therefore the cost would still be at most $3R^*$.
Moreover, from \cref{lemma:fairness_violation} it follows that $(\hat{S},\hat{\phi})$ is a $\lambda$-violating solution with $\lambda  \leq  \frac{2}{\outm}$. This concludes the proof of the theorem.
\end{proof}

\vanillabad*

\begin{proof}
We prove this theorem using a counter example. Specifically, consider the instance in \cref{fig:toy_example} with $n=4$ points belonging to red and blue groups. Let $k=2$ and $l_{\text{red}} = l_{\text{blue}}=1/4$ and $u_{\text{red}} = u_{\text{blue}}=3/4$. The noise parameters are set to $m_{\text{red}}^+=m_{\text{red}}^-=1.$ 
The vanilla $k$-center algorithm selects the centers $\vS = \{s_1, s_2\}$. Consider the lower bound constraints \eqref{cons:lb_trf} in the corresponding LP with $S=\vS$.
This constraint requires that each center $s_1$ and $s_2$ must have a fractional assignment strictly greater than $1$ from each color. This follows from the lower bound constraints in \cref{cons:lb_trf} where each center in $s \in \vS$ and $h \in \{\text{red,blue}\}$ must satisfy
\begin{align*}
    |C_{s,h}^{\text{LP}}| \geq \mhm+l_h |C_{s}^{\text{LP}}| \geq 1 + l_h |C_{s}^{\text{LP}}| >1 
\end{align*}
where $ |C_{s,h}^{\text{LP}}|$ denotes the fractional assignment of strictly greater than one point of color $h$ to center $s$ and $|C_s^{\text{LP}}|$ denotes the fractional assignment of strictly greater than two points to center $s.$ 
However, both $s_1$ and $s_2$ cannot simultaneously be fractionally assigned strictly greater than $2$ points each since there are only $4$ points in total. Therefore, this shows that there exists no feasible solution to LP~\eqref{lp:rfa}-\eqref{eq:endrfa} for any arbitrarily large $R.$
\end{proof}
\begin{figure}[H]
    \centering
    \includegraphics[scale= 0.5]{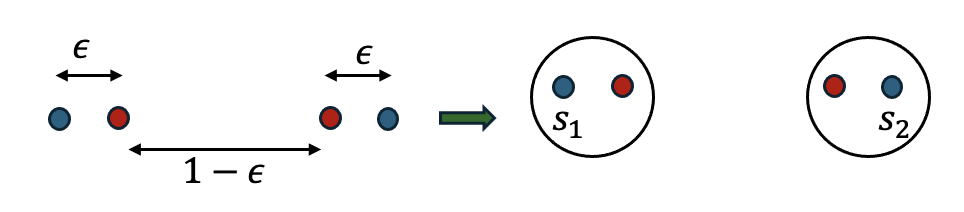}
    \caption{Toy example showing that for the set of centers $\vS$ from vanilla $k$-center algorithm, there does not exist a feasible solution to LP~\eqref{lp:rfa}-\eqref{eq:endrfa} for any arbitrarily large $R$.}
    \label{fig:toy_example}
\end{figure}

\section{\textsc{MaxFlow} Rounding}\label{sec:maxflow}
\textsc{MaxFlow} rounding is given an LP solution satisfying \eqref{lp:rfa},\eqref{cons:ub_trf},\eqref{cons:lb_trf}, and \eqref{eq:endrfa}. The \textsc{MaxFlow} rounding we use is identical to the one used in \cite{dickerson2023doubly,ahmadian2019clustering,bercea2018cost}. Our explanation here closely follows that in \cite{dickerson2023doubly} with some modifications for our setting. Formally, given an LP solution $\mathrm{x}^{\text{LP}}=\{x^{\text{LP}}_{i,j}\}_{i\in \sgood, j \in \cP}$, the network flow diagram is constructed as follows:
\begin{enumerate}
    \item $V= \{s,t\} \cup \cP \cup \{i^h| i \in \sgood, h \in \cH\} \cup \{i \in \sgood\}$. 
    \item $A = A_1 \cup A_2 \cup A_3 \cup A_4$ where $A_1=\{(s,j)| j \in \cP\}$ with upper bound of 1. $A_2=\{(j,i^h) |   x^{\text{LP}}_{i,j} >0\}$ with upper bound of 1. The arc set $A_3=\{(i^h,i)| i \in \sgood, h \in \cH \}$ with lower bound $\floor{\sum_{j \in \cP_h}x^{\text{LP}}_{i,j}}$ and upper bound of $\ceil{\sum_{j \in \cP_h}x^{\text{LP}}_{i,j}}$. As for $A_4=\{(i,t)| i \in \sgood\}$ the lower and upper bounds are $\floor{\sum_{j \in \cP}x^{\text{LP}}_{i,j}}$ and $\ceil{\sum_{j \in \cP}x^{\text{LP}}_{i,j}}$. 
\end{enumerate}
By construction of the network flow diagram the maximum flow that can be achieved at the sink $t$ is $n$ (the number of points). Further, the given LP assignment $\mathrm{x}^{\text{LP}}=\{x^{\text{LP}}_{i,j}\}_{i\in \sgood, j \in \cP}$ is a valid fractional flow that achieves a maximum flow of $n$. Since the upper and lower bound on the arcs are integral, it follows by standard result of the max flow problem that we can find an integral maximum flow $\mathrm{x}^{\text{Integ}}=\{x^{\text{Integ}}_{i,j}\}_{i\in \sgood, j \in \cP}$. Moreover, from the set upper and lower bounds the following is immediate: 
\begin{align*}
      \floor{\sum_{j\in \cP}\mathrm{x}_{i,j}^{\text{LP}}} \leq & \sum_{j\in \cP}\mathrm{x}_{i,j}^{\text{Integ}} \leq  \ceil{\sum_{j\in \cP}\mathrm{x}_{i,j}^{\text{LP}}}  \\ 
       \floor{\sum_{j\in \cP_h}\mathrm{x}_{i,j}^{\text{LP}}} \leq & \sum_{j\in \cP_h}\mathrm{x}_{i,j}^{\text{Integ}} \leq  \ceil{\sum_{j\in \cP_h}\mathrm{x}_{i,j}^{\text{LP}}} 
\end{align*}
Further, recall from \cref{lemma:obj_rounded} that $|C_i^{\text{LP}}|= \sum_{j\in \cP}\mathrm{x}_{i,j}^{\text{LP}}$, $|C_i^{\text{Integ}}|= \sum_{j\in \cP}\mathrm{x}_{i,j}^{\text{Integ}}$,  $|C_{i,h}^{\text{LP}}|=\sum_{j\in \cP_h}\mathrm{x}_{i,j}^{\text{LP}}$, and  $|C_{i,h}^{\text{Integ}}|=\sum_{j\in \cP_h}\mathrm{x}_{i,j}^{\text{Integ}}$. Therefore, it follows that we have 
\begin{align*}
       \floor{|C_i^{\text{LP}}|} \leq & |C_{i}^{\text{Integ}} | \leq  \ceil{|C_i^{\text{LP}}|},\\ 
       \floor{|C_{i,h}^{\text{LP}}|} \leq &|C_{i,h}^{\text{Integ}}|\leq \ceil{|C_{i,h}^{\text{LP}}|}  
\end{align*}

This proves part (2) of \cref{lemma:obj_rounded}. Part (1) of \cref{lemma:obj_rounded} simply follows from that fact that if $\mathrm{x}^{\text{LP}}_{i,j}=0$ then there would not be an arc connecting point (vertex) $j$ to vertex $i^h, \forall h \in \cH$ and that immediately implies that $\mathrm{x}^{\text{Integ}}_{i,j}=0$.

\section{More Discussion About Previous Noise Models and Their Algorithms in Fair Clustering}\label{app:prior_noise}
\subsection{Drawbacks of the Noise Model of Probabilistic Fair Clustering \cite{esmaeili2020probabilistic}}\label{app:pfc_discussion}
We provide an example to show the drawbacks of the probabilistic model introduced in \citet{esmaeili2020probabilistic}. Their theoretical guarantees for robust fair clustering only guarantee to satisfy the fairness constraints in expectation. However, the realizations can be arbitrarily unfair. 
We illustrate this using an example.
Consider a set of $14$ points as shown in the \cref{fig:negative_example} where each point is assigned to either the red or blue group, each with a probability of $1/2$. Any clustering of these points is fair in expectation assuming the lower and upper bounds for both the blue and red group are close to $\frac{1}{2}$. However individual realizations can be unfair. Specifically, since the joint probability distribution is not known (in fact, it is not incorporated at all in the probabilistic model of \citet{esmaeili2020probabilistic}) the realizations could be as shown in the figure where all points in a cluster take on the same color simultaneously in a realization. This shows that the probabilistic model may return clusters that can be fair (proportional) in expectation but completely unfair (unproportional) in realization.    


\begin{figure}[H]
    \centering
    \includegraphics[width=1.1\textwidth]{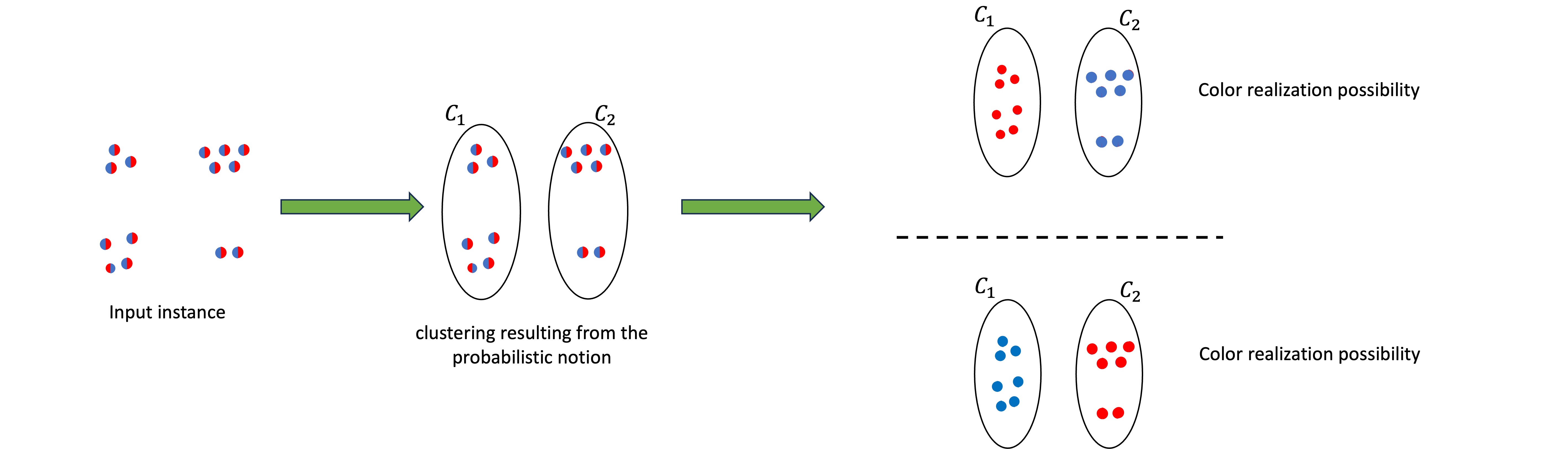}
    \caption{An instance of \fc{} with $14$ points where each point has the probabilities $p_{j}^{\text{red}}=p_{j}^{\text{blue}}=\frac{1}{2}$. Here $p_{j}^{\text{red}}$ and $p_{j}^{\text{blue}} \forall j \in \cP$ denote the probability that point $j$ belong to red and blue groups respectively. Note how the probabilistic fair clustering satisfies fairness in expectation. However, the two realizations shown demonstrate that color proportionality can be completely violated. }
    \label{fig:negative_example}
\end{figure}

\subsection{More Discussion About the Noise Model of \cite{Chhabra23:Robust} and Their Algorithm}\label{app:chhabra_discussion}
We provide an example to show the drawbacks of the noise model introduced by \citet{Chhabra23:Robust}. Their noise model assumes that only a subset of the points are affected by the adversary but they do not specify how one can access this subset. In their experiments, they generate this subset using random sampling. Specifically, they independently sample each point with probability $0.15$ to obtain a subset of points $\cP' \subseteq \cP.$ However, we can easily construct examples where their random sampling method with probability $\approx 0.99$ can never return some subsets. We illustrate this using an example. Consider an instance of \fc{} as shown in \cref{fig:negative_example_two} where only a subset of points have incorrect memberships according to \citet{Chhabra23:Robust}. Their sampling process selects a subset (comprising 10\% of the points) by randomly sampling each point independently with a probability of $0.1$. As a result, out of $160$ points $16$ points are perturbed in expectation. However, this does not capture scenarios where all perturbations occur within a subset of points (as shown in the left side of \cref{fig:negative_example_two}) as the probability of such an event is close to $0$, precisely $0.0002$. On the other hand, our model considers for all possible subsets with $16$ points.  As a result, we can model the scenarios where all the incorrect memberships occur in a single group or more generally all possible combinations across the two groups. 

\begin{figure}[H]
    \centering
    \includegraphics[scale=0.4]{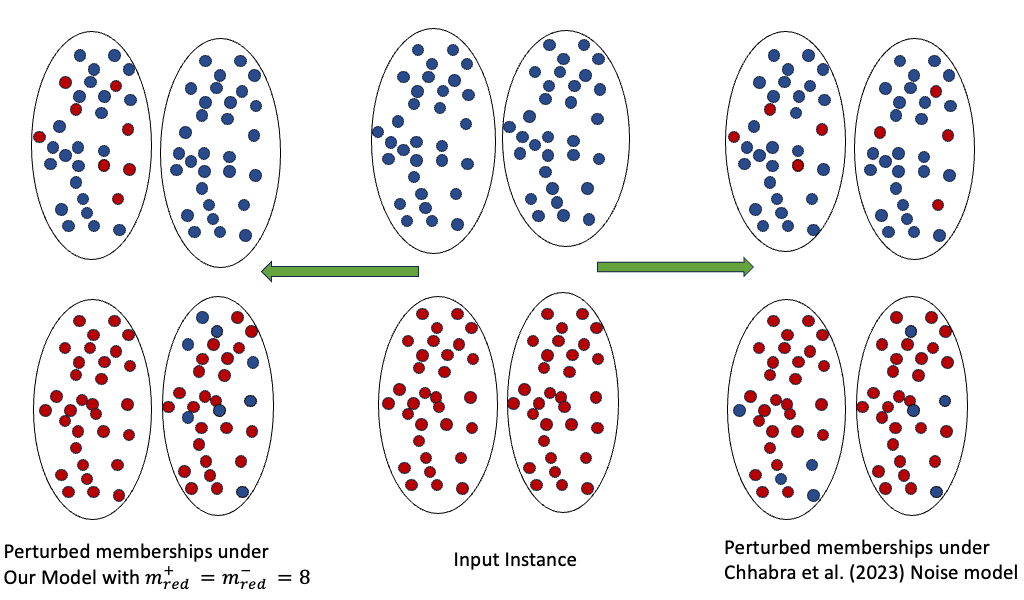}
    \caption{Consider an instance of \fc{} with $n=160$ points, where $10\%$ of the points have incorrect memberships. \citet{Chhabra23:Robust} perturbs a subset (comprising 10\% of the points) by randomly sampling each point independently with a probability of $0.1$. As a result $16$ points are perturbed in expectation. However, this does not capture scenarios where all perturbations occur within a subset of points as shown on the left side. This is because the probability of such an event is $\approx 0$. On the other hand, our model considers for all possible subsets with $16$ points, thereby covering all possible scenarios.}
    \label{fig:negative_example_two}
\end{figure}

Furthermore, in terms of algorithms \citep{Chhabra23:Robust} also provide a defense algorithm by a Consensus Clustering method via k-means (Lloyd's algorithm) combined with fair constraints to achieve robustness against their proposed attack. Their algorithm trains a neural network using a loss function based on pair-wise similarity information obtained by running Consensus $k$-means clustering, which is different from our $k$-center objective. Moreover, their \emph{fair clustering loss} fails to include the fractional proportional bounds present in our fair clustering instances. Therefore, their algorithm is inapplicable to our problem.
Further, their algorithm provide no theoretical or empirical guarantees on the distance-based clustering cost. Finally, their fairness loss does not model the fairness constraints based on proportional lower and upper bounds or any other parameters provided by the stakeholder. 
More importantly, their robust algorithms have no theoretical guarantees on ex-post fairness violations, whereas ours do.

\section{Additional Details on our Experimental Setup \& Experimental Results}\label{app:experiments}
In Section~\ref{app:experiments-setup} we expand on the libraries and hardware used to complete experiments.
In Section~\ref{sec:exp-running-times} we discuss the running times of our algorithm \rfcalg{} and baselines presented in Figure~\ref{fig:m-and-T-plots} of Section~\ref{sec:experiments}.
In Section~\ref{sec:diabetes-experiments} we use a fourth dataset, \diabetes, to compare the algorithms' running times as the number of points $n$ increases.
Lastly, in Section~\ref{sec:smaller-m-experiments} we repeat the experiments in Figure~\ref{fig:m-and-T-plots} of Section~\ref{sec:experiments} for a different range of $m$ values. 



\subsection{Experimental Setup}\label{app:experiments-setup}
The experiments are run on Python 3.6.15 on a commodity laptop with a Ryzen 7 5800U and 16GB of RAM.
The linear programs (LP) in the algorithms are solved using CPLEX~12.8.0.0 \citep{nickel2022ibm} and flow problems are solved using \texttt{NetworkX}~2.5.1 \citep{hagberg2013networkx}.
In total, the experiments in Figure~\ref{fig:m-and-T-plots} solve $53$ fair clustering instances (30 robust fair instances, 20 probabilistic fair, and 3 deterministic fair).
\probalg{} is not run on \cens because its theoretical guarantees hold for the two-color setting only.
\detalg{} has a single fair clustering instance per dataset because \detalg{} does not depend on $m$.
Our code implementation forks the code of \cite{dickerson2023doubly}.

\subsection{Running Times of the Experiments} \label{sec:exp-running-times}
Figure~\ref{fig:runtime-large-m} shows \rfcalg{} outperforms the baselines in both of the larger datasets (\adult and \cens); this is not the case in \bank, but the difference is at most 20 seconds there.
\begin{figure}
    \centering
    \includegraphics[width=\textwidth]{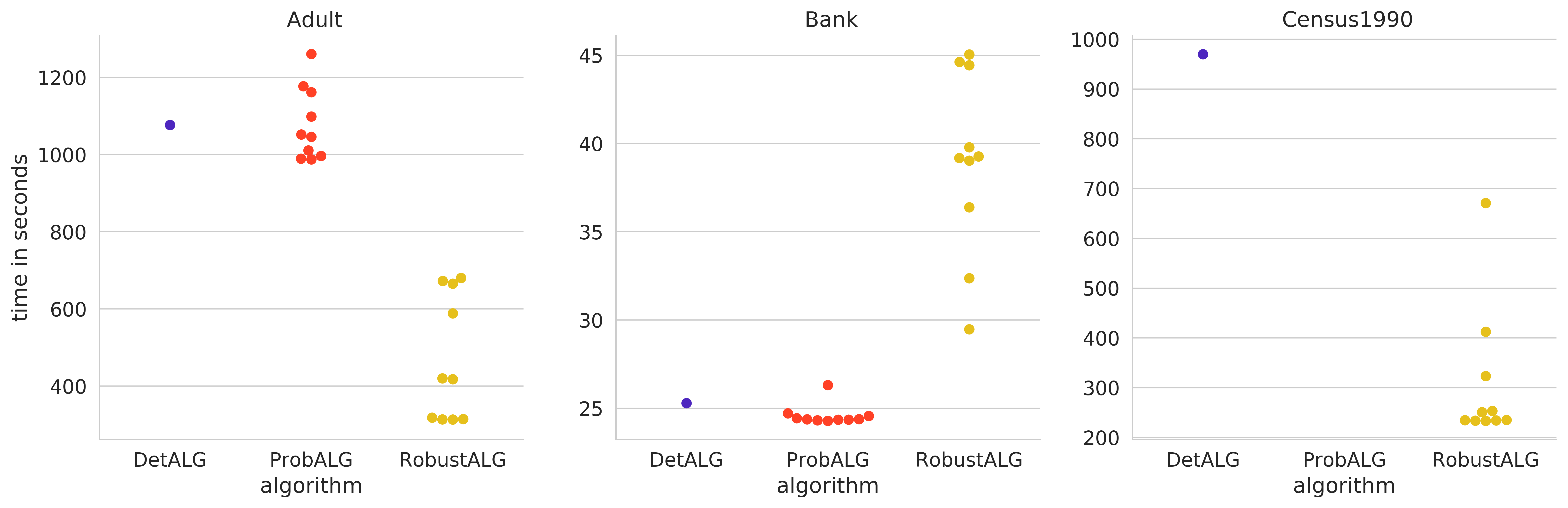}
    \caption{
    Wall clock time each algorithm took to solve the associated \{deterministic, probabilistic, robust\} fair clustering instances in Figure~\ref{fig:m-and-T-plots}.}
    \label{fig:runtime-large-m}
\end{figure}
In each step of the binary search, \rfcalg{} has an additional step of \textsc{GetCenters} which contributes to the running time.
However, the running time of these algorithms is largely dominated by solving the associated LPs.
However, the LPs for the baselines always use $k$ centers and thus have $nk$ variables.
However, \rfcalg{} uses the $k' \le k$ centers selected by \textsc{GetCenters}.
In practice, this can be a substantial speedup as the difference between $k'$ and $k$ increases and as $n$ increases.
This behavior is also observed in \ref{sec:diabetes-experiments}.

Related to this phenomenon, as $m$ grows, the binary search in \rfcalg{} discards smaller values of $R$ because more LPs become infeasible.
Therefore, the binary search moves onto greater values of $R$, for which \textsc{GetCenters} selects fewer centers.
Indeed, in Figure~\ref{fig:runtime-large-m-vs-m} we can observe \rfcalg{} speeding up as $m$ increases.
On the other hand, and as expected, the running time of \probalg{} does not exhibit this interaction with $m$.

\begin{figure}
    \centering
    \includegraphics[width=\textwidth]{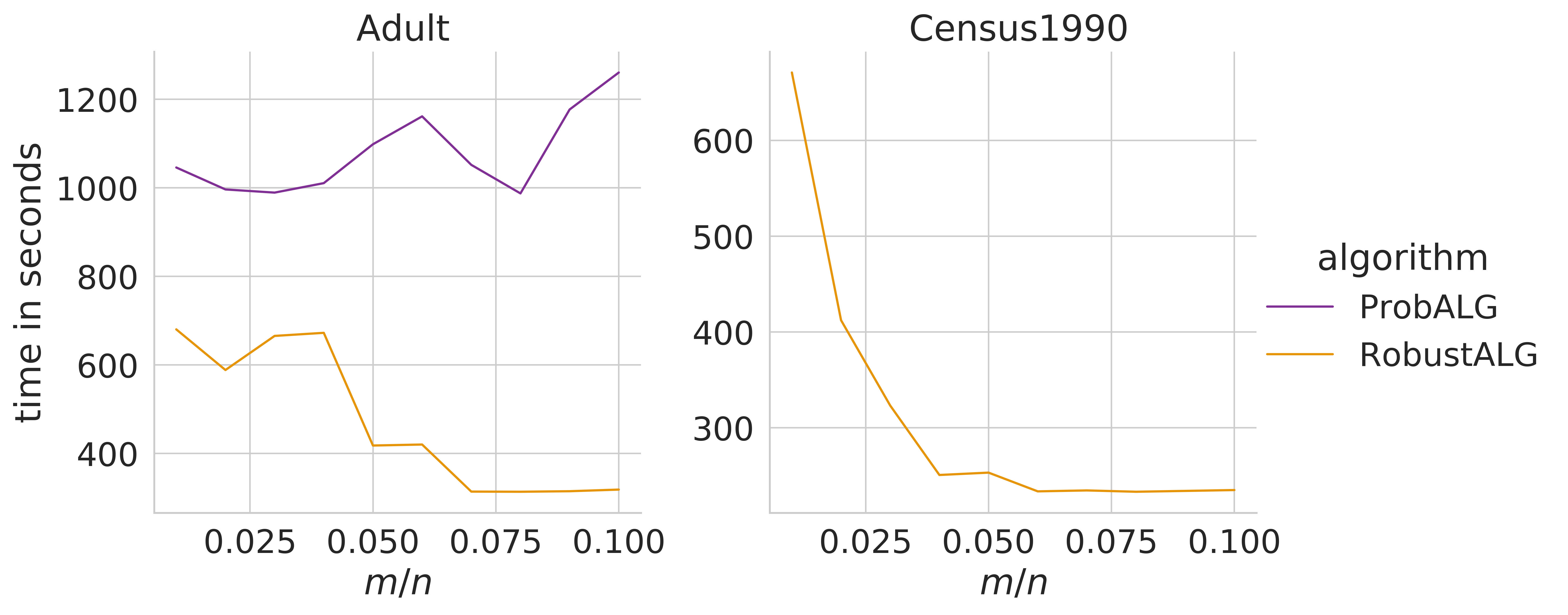}
    \caption{Running times of \probalg{} and \rfcalg{} from Figure~\ref{fig:runtime-large-m} but plotted against $m$.}
    \label{fig:runtime-large-m-vs-m}
\end{figure}

\subsection{Additional Experiments on the \diabetes dataset} \label{sec:diabetes-experiments}
We use an additional dataset \diabetes to supplement our running time experiments.
Like our other three datasets, \diabetes is also from the UCI repository.
It has 49 features and we set up two colors corresponding to whether the patient (i.e., point) was or was not female.
Figure~\ref{fig:runtime-diabetes} shows the dependence of all three algorithms on $k$, and we can also see that the baselines are more sensitive to larger $k$.
These plots use the first $n'$ points of \diabetes for $n'$ in $\{2000, 4000, 6000, \dots, 23000\}$.
For each instance we take $m = 0.005n'$.
The proportionality constants $u_h$ and $l_h$ are set to be feasible for \rfcalg{} exactly in the same way as in Section~\ref{sec:experiments}.

We also recreate the fairness violation and objective plots from Figure~\ref{fig:m-and-T-plots} on \diabetes as shown in Figure \ref{fig:diabetes-ordinary}. For these experiments we fix $k=5$ and $\forall h$, $l_h = 0.3$ and $u_h=0.75$. A moment's reflection shows the worst fairness violation possible is 0.3. Indeed, \detalg{} and \probalg{}, respectively, exactly reach and get very close to this violation.
The objectives also only differ by at most a meager 4 units of distance.
\begin{figure}[h]
    \centering
    \includegraphics[width=\textwidth]{Figures/experiments/diabetes_runtimes.png}
    \caption{Running times of \rfcalg{}, \probalg{}, and \detalg{} on dataset \diabetes as the number of points $n$ increases. We run the experiments for number of clusters $k$ of 5 and 10.}
    \label{fig:runtime-diabetes}
\end{figure}

\begin{figure}[h]
    \centering
    \includegraphics[width=0.8\textwidth]{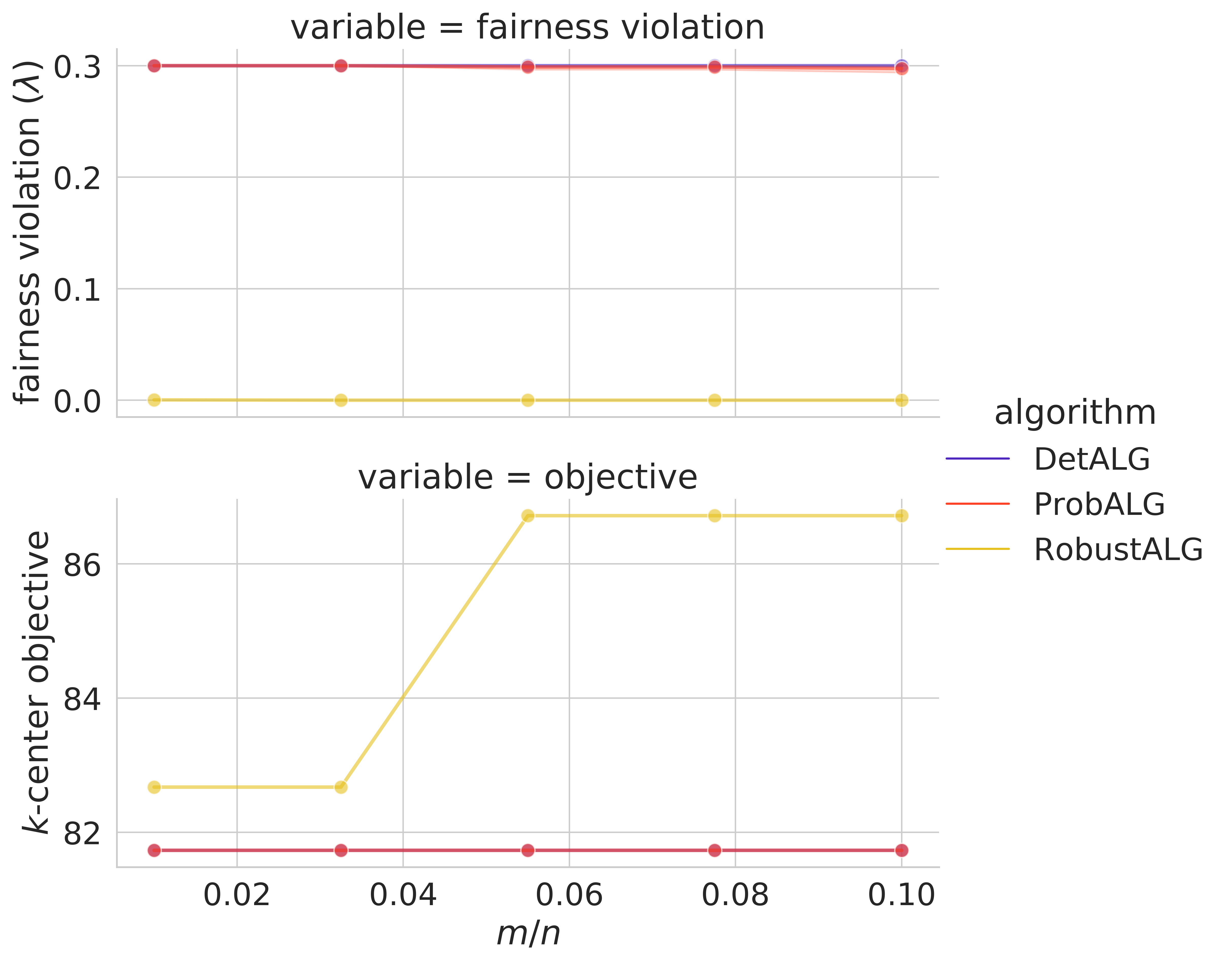}
    \caption{Plots of $k$-center objective and fairness violations on \diabetes.}
    \label{fig:diabetes-ordinary}
\end{figure}

\subsection{Additional experiments on smaller values of $m$} \label{sec:smaller-m-experiments}
Figure~\ref{fig:small-m-plots} recreates the experiments in Figure~\ref{fig:m-and-T-plots} but with values of $m$ one order of magnitude smaller and $k=5$.
These plots show greater variance in the fairness violations of \probalg{}, which makes sense given the smaller $m$ and that, in probabilistic fair clustering, the probability a point's label is correct is $p_{\text{acc}} = 1 - m/n$.
\begin{figure}[h]
    \centering
    \includegraphics[width=\textwidth]{Figures/experiments/small_m_m_vs_fairvio.png}
    \caption{A repeat of Figure~\ref{fig:m-and-T-plots} but for smaller values of $m$ and $k=5$ instead of $k=10$.}
    \label{fig:small-m-plots}
\end{figure}
\end{document}